\begin{document}

\newlength{\halftextwidth}
\setlength{\halftextwidth}{0.47\textwidth}
\def\halffigsize{2.2in}
\def\thirdfigsize{1.5in}
\def\negvspace{0in}
\def\posvspace{0em}






\newcommand{\myset}[1]{\ensuremath{\mathcal #1}}

\renewcommand{\theenumii}{\alph{enumii}}
\renewcommand{\theenumiii}{\roman{enumiii}}
\newcommand{\figref}[1]{Figure \ref{#1}}
\newcommand{\tref}[1]{Table \ref{#1}}
\newcommand{\myOmit}[1]{}
\newcommand{\myldots}{.}

\newcommand{\nina}[1]{\marginpar{\sc nina} \textit{{#1}}}

\newtheorem{mydefinition}{Definition}
\newtheorem{mytheorem}{Theorem}
\newtheorem*{myexample}{Running example}
\newtheorem{mytheorem1}{Theorem}
\newtheorem{definition}{Definition}
\newtheorem{theorem}{Theorem}
\newtheorem{lemma}{Lemma}
\newtheorem{corollary}{Corollary}
\newcommand{\myproof}{\noindent {\bf Proof:\ \ }}
\newcommand{\myqed}{\mbox{$\Box$}}

\newcommand{\oalldiffshort}{\mbox{\sc oBC}}
\newcommand{\shc}{\mbox{\sc sim-HC}\xspace}
\newcommand{\shs}{\mbox{\sc sim-HallSet}\xspace}
\newcommand{\ashs}{\mbox{\sc a-sim-HallSet}\xspace}
\newcommand{\sbm}{\mbox{\sc sim-BM}\xspace}
\newcommand{\sset}{\mbox{\sc loose-Set}\xspace}
\newcommand{\mymod}{\mbox{\rm mod}}
\newcommand{\range}{\mbox{\sc Range}}
\newcommand{\roots}{\mbox{\sc Roots}}
\newcommand{\myiff}{\mbox{\rm iff}}
\newcommand{\alldifferent}{\mbox{\sc AllDifferent}\xspace}
\newcommand{\alldiff}{\mbox{\sc AllDifferent}\xspace}
\newcommand{\interdistance}{\mbox{\sc InterDistance}}
\newcommand{\permutation}{\mbox{\sc Permutation}}
\newcommand{\disjoint}{\mbox{\sc Disjoint}}
\newcommand{\cardpath}{\mbox{\sc CardPath}}
\newcommand{\CARDPATH}{\mbox{\sc CardPath}}
\newcommand{\knapsack}{\mbox{\sc Knapsack}}
\newcommand{\common}{\mbox{\sc Common}}
\newcommand{\uses}{\mbox{\sc Uses}}
\newcommand{\lex}{\mbox{\sc Lex}}
\newcommand{\LEX}{\mbox{\sc Lex}}
\newcommand{\SnakeLex}{\mbox{\sc SnakeLex}}
\newcommand{\usedby}{\mbox{\sc UsedBy}}
\newcommand{\atmostnvalue}{\mbox{\sc AtMostNValue}}
\newcommand{\atleastnvalue}{\mbox{\sc AtLeastNValue}}
\newcommand{\slide}{\mbox{\sc Slide}}
\newcommand{\SLIDE}{\mbox{\sc Slide}}
\newcommand{\circularslide}{\mbox{\sc Slide}_{\rm O}}
\newcommand{\among}{\mbox{\sc Among}}
\newcommand{\mysum}{\mbox{\sc Sum}}
\newcommand{\amongseq}{\mbox{\sc AmongSeq}}
\newcommand{\atmost}{\mbox{\sc AtMost}}
\newcommand{\atleast}{\mbox{\sc AtLeast}}
\newcommand{\element}{\mbox{\sc Element}}
\newcommand{\gcc}{\mbox{\sc Gcc}}
\newcommand{\nvalue}{\mbox{\sc NValue}}
\newcommand{\egcc}{\mbox{\sc EGcc}}
\newcommand{\gsc}{\mbox{\sc Gsc}}
\newcommand{\sequence}{\mbox{\sc Sequence}}
\newcommand{\contiguity}{\mbox{\sc Contiguity}}
\newcommand{\PRECEDENCE}{\mbox{\sc Precedence}}
\newcommand{\precedence}{\mbox{\sc Precedence}}
\newcommand{\assignnvalues}{\mbox{\sc Assign\&NValues}}
\newcommand{\linksettobooleans}{\mbox{\sc LinkSet2Booleans}}
\newcommand{\domain}{\mbox{\sc Domain}}
\newcommand{\symalldiff}{\mbox{\sc SymAllDiff}}
\newcommand{\valsymbreak}{\mbox{\sc ValSymBreak}}
\newcommand{\RowColSym}{\mbox{\sc RowColLexLeader}}
\newcommand{\RowColSymShort}{\mbox{\sc RowColLL}}
\newcommand{\RowSymShort}{\mbox{\sc RowLL}}
\newcommand{\ColSymShort}{\mbox{\sc ColLL}}
\newcommand{\NoSymShort}{\mbox{\sc NoSB}}
\newcommand{\RowLexLeader}{\mbox{\sc RowLexLeader}}
\newcommand{\OrderRowCol}{\mbox{\sc Order1stRowCol}}

\newcommand{\slidingsum}{\mbox{\sc SlidingSum}}
\newcommand{\MaxIndex}{\mbox{\sc MaxIndex}}
\newcommand{\REGULAR}{\mbox{\sc Regular}}
\newcommand{\regular}{\mbox{\sc Regular}}
\newcommand{\Regular}{\mbox{\sc Regular}}
\newcommand{\STRETCH}{\mbox{\sc Stretch}}
\newcommand{\SLIDEOR}{\mbox{\sc SlideOr}}
\newcommand{\NAE}{\mbox{\sc NotAllEqual}}
\newcommand{\mymax}{\mbox{\rm max}}

\newcommand{\calC}{\ensuremath{\mathcal{C}}}
\newcommand{\calL}{\ensuremath{\mathcal{L}}}
\newcommand{\calX}{\ensuremath{\mathcal{X}}}

\newcommand{\Xub}{\ensuremath{X^+}}
\newcommand{\Xlb}{\ensuremath{X^-}}
\newcommand{\Yub}{\ensuremath{Y^+}}
\newcommand{\Ylb}{\ensuremath{Y^-}}

\newcommand{\constraint}[1]{\mbox{\sc #1}\xspace}
\newcommand{\oalldiff}{\constraint{OverlappingAllDiff}}

\newcommand{\tuple}[1]{\left\langle #1 \right\rangle}

\newcommand{\todo}[1]{{\tt (... #1 ...)}}
\newcommand{\set}[1]{\left\{ #1 \right\}}

{\makeatletter
 \gdef\xxxmark{%
   \expandafter\ifx\csname @mpargs\endcsname\relax 
     \expandafter\ifx\csname @captype\endcsname\relax 
       \marginpar{xxx}
     \else
       xxx 
     \fi
   \else
     xxx 
   \fi}
 \gdef\xxx{\@ifnextchar[\xxx@lab\xxx@nolab}
 \long\gdef\xxx@lab[#1]#2{{\bf [\xxxmark #2 ---{\sc #1}]}}
 \long\gdef\xxx@nolab#1{{\bf [\xxxmark #1]}}
}

\newcommand{\DC}{\ensuremath{DC}\xspace}
\newcommand{\Xbf}{\mbox{{\bf X}}\xspace}
\newcommand{\LEXCHAIN}{\mbox{\sc LexChain}}
\newcommand{\DLex}{\mbox{\sc DoubleLex}}
\newcommand{\snakelex}{\mbox{\sc SnakeLex}}
\newcommand{\DLexColSum}{\mbox{\sc DoubleLexColSum}}

\title{Propagating Conjunctions of \alldiff Constraints}
\author{
Christian Bessiere\\ LIRMM, CNRS\\
Montpellier, France \\
bessiere@lirmm.fr\\
\And
George Katsirelos\\CRIL-CNRS\\
Lens, France\\
gkatsi@gmail.com
\And
Nina Narodytska\\
NICTA and UNSW\\
Sydney, Australia\\
ninan@cse.unsw.edu.au
\And
Claude-Guy Quimper\\ 
Universit\'{e} Laval\\
Qu\'{e}bec, Canada\\
cquimper@gmail.com\\
\And
Toby Walsh\\
NICTA and UNSW\\
Sydney, Australia\\
toby.walsh@nicta.com.au}

\maketitle
\begin{abstract}
We study propagation algorithms for the conjunction of two 
$\alldiff$ constraints. Solutions of an $\alldiff$ constraint 
can be seen as perfect matchings on the variable/value bipartite graph. 
Therefore, we investigate the problem of finding 
simultaneous bipartite matchings.
We present an extension of the famous Hall theorem 
which characterizes when simultaneous bipartite
matchings exists. Unfortunately, finding
such matchings is NP-hard
in general. However, we prove a surprising result that finding a
simultaneous matching on a convex bipartite graph takes 
just polynomial time. Based on this theoretical result,
we provide the first polynomial time bound consistency
algorithm for the conjunction of two $\alldiff$ constraints.
We identify a pathological
problem on which this propagator is exponentially
faster compared to existing propagators.
Our experiments show that this new propagator
can offer significant benefits 
over existing methods. 
\end{abstract}


\section{Introduction}

Global constraints 
are a critical factor in the success of
constraint programming. They capture patterns that often
occur in practice (e.g. ``these courses must occur
at different times''). 
In addition, fast propagation algorithms
are associated with each global constraint
to reason about potential solutions (e.g.
``these 4 courses have only 3 time slots between
them so, by a pigeonhole argument, 
the problem is infeasible''). 
One of the oldest and most useful global constraints is the \alldifferent
constraint~\cite{alice}. This specifies that
a set of variables takes all different values.
Many different algorithms have been proposed for 
propagating the
\alldifferent constraint
\cite{regin1,leconte,puget98
}. 
Such propagators can have
a significant impact on our ability
to solve problems 
\cite{swijcai99}
. 

Problems often
contain multiple \alldifferent constraints
(e.g. ``The CS courses must occur
at different times, as must the IT
courses. In addition, CS and IT
have several courses in common''). 
Currently, constraint solvers ignore
information about the overlap between multiple
constraints (except for the limited communication
provided by the domains of common
variables). Here, we show the benefits
of reasoning about such overlap. 
This is a challenging problem as 
finding a solution to just two \alldifferent constraints
is NP-hard \cite{KEKM08}
and  existing approaches to deal with such overlaps require
exponential space \cite{lardeux}. 
Our approach is to focus on domains
that are ordered, as often occurs in practice. For example
in our time-tabling problem, values might represent times (which are naturally
ordered). In such cases, domains can be compactly
represented by intervals. Propagation algorithms 
can narrow such intervals using the notion of bound consistency.
Our main result is to prove we can enforce
bound consistency on two \alldifferent constraints
in polynomial time. Our algorithm exploits
a 
connection with matching on bipartite graphs. 
In particular, we consider \emph{simultaneous} matchings.
By generalizing Hall's theorem, we 
identify a necessary and sufficient condition for the existence of
such a matching and show  that the 
this problem is polynomial for convex 
graphs. 
\myOmit{
. This condition shows the surprising 
result that the 
simultaneous matching problem is polynomial for convex bipartite
graphs. 
Based on these insights, we develop 
the first polynomial
time bound consistency algorithm for the conjunction of two $\alldiff$
constraints.
}
\section{Formal background}
\textbf{Constraint programming.}
We use capitals for variables 
and lower case for values. 
Values range over 1 to $d$. 
We write $D(X)$ for the domain of values for $X$, $lb(X)$ ($ub(X)$)
for the smallest (greatest) value in $D(X)$. 
A \emph{global constraint} is one in which the number of variables $n$
is a parameter. For instance,
$\alldiff([X_1,\ldots,X_n])$ 
ensures that 
$X_i \neq X_j$ for any $i<j$. 
%
Constraint solvers prune search
by enforcing properties like 
domain consistency. A constraint is 
\emph{domain consistent} (\emph{DC})
iff when a variable is assigned any value in its domain, there
are compatible values in the domains of all other variables.
Such an assignment is 
a \emph{support}. 
A constraint is \emph{bound consistent} (\emph{BC})
iff when a variable is
assigned the minimum or maximum value in its domain, there are compatible
values between the minimum and maximum domain value
for all other variables.
Such an assignment is 
a \emph{bound support}. 
A constraint is \emph{bound disentailed}
iff no possible assignment is a  bound support. 

\noindent\textbf{Graph Theory.} 
Solutions
of $\alldiff$ correspond to matchings
in a bipartite variable/value graph~\cite{regin1}. 
\begin{definition}
  The graph  $G = \tuple{V, E}$ is bipartite
  if $V$ partitions into 2 classes, $V =  A \cup B$  and 
  $A\cap B = \emptyset$, 
  such that every edge has ends in different classes. 
 \end{definition}
\begin{definition}
  Let $G = \tuple{A \cup B, E}$ be a bipartite graph. 
  A matching that covers $A$
  is a set of  pairwise non-adjacent edges $M \subseteq E$ such that
  every vertex from $A$ is incident to exactly one edge from $M$.
\end{definition}
We will consider simultaneous matchings on
bipartite graphs(\sbm)~\cite{KEKM08}. 
\begin{definition}
An  \emph{overlapping bipartite graph} is a bipartite graph 
$G = \left\langle A \cup B,E \right\rangle$ and two sets
$S$ and $T$ such that $A = S \cup T$, 
$A\cap B  = \emptyset$, 
and $S \cap T \neq \emptyset$. 
\end{definition}

\begin{definition}
  Let $\tuple{A \cup B, E}$ and $S$, $T$ be an overlapping bipartite graph. 
  A simultaneous matching
  is a set of edges $M \subseteq E$ such that $M \cap (S \times B)$ 
  and $M \cap (T \times B)$ are matchings that cover $S$ and $T$, respectively.
\end{definition}

In the following, we use the convention that 
a set of vertices $P$ is a subset of the partition $A$.
We write $N(P)$ for the neighborhood of $P$,
$P^S = P \cap (S \setminus T)$, $P^T = P \cap (T \setminus
S)$ and $P^{ST} = P \cap S \cap T$.
$\sbm$ problems frequently occur in real world applications like
production scheduling and timetabling. We introduce 
here a simple  exam timetabling problem that 
will serve as a running example. 

\begin{myexample}
  \label{e:intro}
We have 7 exams offered over 5 days and 2 students. 
The first student
has to take the first 5 exams  and the second student
has to take the last 5 exams. Due to
the availability of examiners, not
every exam is offered each day. 
For example,  the first exam 
cannot be on the last day of the week.
Only one exam can be sat each day.
This problem can be encoded as a $\sbm$ problem.
$A$ represents the exams and contains $7$ vertices $X_1$ to $X_7$.
$B$ represents the days and contains the vertices 1 to 5.
$S = [X_1,X_2,X_3,X_4,X_5]$ and 
$T = [X_3,X_4,X_5,X_6,X_7]$.
We connect vertices between $A$ and $B$ to encode the
availability restrictions of the examiners. 
The adjacency  matrix of the 
graph is as follows:
$$
{\scriptsize
\begin{array}{c c|ccccc} 
& & 1 & 2 & 3 & 4 & 5  \\ \hline
\multirow{2}{*}{$A^{S} = S \setminus T$}
&  X_1 & \ast & \ast & \ast & \ast &   \\ 
&  X_2 & \ast & \ast & & & \\ 
\hline \hline
\multirow{3}{*}{$A^{ST} =  S \cap T$}
&  X_3 &  \ast & \ast & \ast &  & \\ 
&  X_4 &  \ast & \ast & \ast &  & \\ 
&  X_5 &  \ast & \ast & \ast &  \ast &  \ast\\ 
\hline \hline
\multirow{2}{*}{$A^{T} =  T \setminus S$}
&  X_6 &  & \ast  & \ast& &  \\
&  X_7 &  & \ast  & \ast& \ast& \ast 
\end{array}
}
$$

Finding a solution for this $\sbm$ problem 
is equivalent to solving the timetabling problem.
\qed
\end{myexample}

\section{Simultaneous Bipartite Matching}
We now consider how to find
a simultaneous matching. 
Unfortunately, this problem is NP-complete in general~\cite{KEKM08}. 
Our contribution here is to identify a necessary
and sufficient condition for the existence of
a simultaneous matching based on an extension of
Hall's theorem~\cite{Hall35}.
We use this to show that
a simultaneous matching on a convex
bipartite graph can be found in polynomial time.
%
%

In the following, let $G'_{(u,v)}$ be the subgraph
of the overlapping bipartite graph $G$
that is induced by 
choosing an edge $(u,v)$ to be in the simultaneous matching. 
If $u \in A^{ST}$ then $G'_{(u,v)} = G - \{u, v\}$.
If $u \in A^S$ (and symmetrically if $u \in
  A^T$) then
  $G'_{(u,v)} = \tuple{V-\{u\}, E \setminus \set{ (u', v) | u' \in
      S}}$. 
  If $M$ is a \sbm in $G'_{(u,v)}$, then $M \cup \{(u,v)\}$ is a \sbm
  in $G$.
 Since the edge $(u,v)$ is implied throughout, we write $G' =
  G'_{(u,v)}$. In addition, we write $N'(P) = N_{G'}(P)$.

\subsection{Extension of Hall's Theorem}
\label{sec:extend-hall}

Hall's theorem provides a necessary and sufficient
condition for the existence of a perfect matching in a bipartite
graph.

\begin{theorem}[Hall Condition~\cite{Hall35}]
  Let $G = \tuple{A \cup B, E}$ such that $A \cap B =
  \emptyset$. There exists a perfect matching iff
  $  |N(P)| \geq |P| $ for $ P \subseteq A$.
\end{theorem}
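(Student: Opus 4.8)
The plan is to prove the two directions separately, reading ``perfect matching'' here as a matching that covers $A$, which is the natural reading since the condition is stated only over subsets $P \subseteq A$. The forward direction is immediate: if a matching $M$ covers $A$, then $M$ assigns to each vertex of any $P \subseteq A$ a distinct private neighbor in $B$, and these neighbors are distinct elements of $N(P)$, so $|N(P)| \geq |P|$. All the work is in the converse.

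For the converse I would argue by induction on $|A|$, assuming $|N(P)| \geq |P|$ for every $P \subseteq A$. The base case $|A| = 1$ is trivial, since the lone vertex has at least one neighbor. For the inductive step I would split into two cases according to how tight the condition is. In the \emph{slack} case, where every nonempty proper $P \subsetneq A$ satisfies $|N(P)| \geq |P| + 1$, I pick an arbitrary $a \in A$, match it to any neighbor $b$, and delete both. Removing $b$ drops each neighborhood by at most one, so the residual graph on $A \setminus \{a\}$ still satisfies the Hall condition and the induction hypothesis supplies a matching covering $A \setminus \{a\}$; adding the edge $(a,b)$ completes the cover.

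The main obstacle is the \emph{tight} case, where some nonempty proper $P_0 \subsetneq A$ has $|N(P_0)| = |P_0|$, so that no neighbor of $P_0$ can be wasted. By induction applied to $P_0$ together with $N(P_0)$, I obtain a matching $M_1$ covering $P_0$ and using up exactly $N(P_0)$. I then restrict attention to $A \setminus P_0$ with the available values $B \setminus N(P_0)$, and the key verification is that the Hall condition survives this restriction: for any $Q \subseteq A \setminus P_0$, the neighbors of $Q$ lying outside $N(P_0)$ number $|N(P_0 \cup Q)| - |N(P_0)| \geq |P_0| + |Q| - |P_0| = |Q|$, using the disjointness of $P_0$ and $Q$ together with the Hall condition applied to $P_0 \cup Q$. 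Induction then yields a matching $M_2$ covering $A \setminus P_0$ within $B \setminus N(P_0)$, and since $M_1$ and $M_2$ touch disjoint value sets, $M_1 \cup M_2$ covers $A$.

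Finally I would note an alternative route, perhaps more directly reusable for the later matching-based sections, through augmenting paths: take a maximum matching $M$, suppose it misses some $a \in A$, and let $R \subseteq A$ be the set of $A$-vertices reachable from $a$ by $M$-alternating paths. Maximality forbids an augmenting path, which forces $N(R)$ to consist only of matched vertices paired back into $R$, giving $|N(R)| = |R| - 1 < |R|$ and contradicting the hypothesis. I expect the inductive argument to be the cleanest to write out in full, so I would carry that one out in detail and relegate the augmenting-path view to a remark.
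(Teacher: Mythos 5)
Your proof is correct, and the paper itself states Hall's theorem as a cited classical result without proof. Your two-case induction on $|A|$ (every proper subset strictly slack versus some tight set $P_0$, matching $P_0$ into $N(P_0)$ and verifying the condition survives on the residual graph) is precisely the argument from Diestel (p.~37) that the paper explicitly adapts to prove its Simultaneous Hall Condition, so you have taken essentially the same approach the paper builds on.
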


Interestingly we only need a small adjustment
for simultaneous matching.

%

\begin{theorem}[Simultaneous Hall Condition (\shc)]
  \label{thm:extend-hall}
  Let $G = \left\langle A \cup B,E \right\rangle$ and sets $S$, $T$ be an overlapping bipartite graph.
 There exists a $\sbm$, iff
   $ |N(P)| + |N(P^S) \cap N(P^T)|\geq |P|$  for $ P \subseteq A$. 
\end{theorem}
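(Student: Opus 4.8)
The plan is to prove the two directions separately: necessity follows from a direct count of the values used by a \sbm, while sufficiency I would establish by induction on $|A|$ using the residual graph $G'_{(u,v)}$ set up above. For necessity, fix a \sbm $M$ and an arbitrary $P\subseteq A$, and split the values $M$ assigns to $P$ according to the blocks $P^S$, $P^{ST}$, $P^T$ into value sets $V_S$, $V_{ST}$, $V_T$. Because $M\cap(S\times B)$ is a matching, the vertices of $P^S\cup P^{ST}$ receive pairwise distinct values, so $V_{ST}$ is disjoint from $V_S$ and $|V_{ST}|=|P^{ST}|$, $|V_S|=|P^S|$; symmetrically, using $M\cap(T\times B)$, $V_{ST}$ is disjoint from $V_T$ and $|V_T|=|P^T|$. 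The only values that can be reused are those shared by a $P^S$ and a $P^T$ vertex, and each such value lies in $N(P^S)\cap N(P^T)$, so $|V_S\cap V_T|\le|N(P^S)\cap N(P^T)|$. Since all these values belong to $N(P)$, counting distinct values gives $|N(P)|\ge|V_{ST}|+|V_S\cup V_T|=|P|-|V_S\cap V_T|\ge|P|-|N(P^S)\cap N(P^T)|$, which is exactly the \shc.

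For sufficiency I would induct on $|A|$, the empty graph giving the empty matching. Assuming the \shc holds for $G$, it suffices to exhibit an edge $(u,v)$ for which $G'_{(u,v)}$ again satisfies the \shc, since the induction hypothesis then yields a \sbm of $G'$ which extends to one of $G$ by the property recalled above. I would track the deficiency $\delta(P)=|P|-|N(P)|-|N(P^S)\cap N(P^T)|$, so that the \shc reads $\delta(P)\le 0$ for all $P$, and split into the classical two cases. If every nonempty $P$ has strict slack $\delta(P)\le -1$, I would match a vertex $u\in A^S\cup A^T$ to any neighbour $v$ (when $A=A^{ST}$ the overlap term vanishes and the statement is ordinary Hall). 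Matching such a $u$ deletes only the $S$-incident (resp. $T$-incident) edges of $v$, and the key observation is a case split on whether some $Q^T$-vertex is adjacent to $v$: if so, $v$ is retained in $N'(Q)$ and only the intersection term can drop, and if not, the intersection term is unaffected and only $|N'(Q)|$ can drop. Hence $\delta'(Q)\le\delta(Q)+1\le 0$ for all $Q\subseteq A\setminus\{u\}$, so the \shc is preserved.

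The remaining, and hardest, case is when a tight set $\delta(P^\ast)=0$ exists. Here I would take a minimal tight $P^\ast$ and match a vertex of $P^\ast$ to a value forced to lie in the appropriate neighbourhood, arguing that feasibility is kept both for sets contained in $P^\ast$ and for sets meeting $A\setminus P^\ast$. The main obstacle is precisely the coupling introduced by the term $|N(P^S)\cap N(P^T)|$: in Hall's theorem one uncrosses tight sets using submodularity of the single function $|N(\cdot)|$, via $N(P_1\cup P_2)=N(P_1)\cup N(P_2)$ and $N(P_1\cap P_2)\subseteq N(P_1)\cap N(P_2)$, whereas here the blocks $P^S$ and $P^T$ vary with $P$ and the intersection of neighbourhoods does not uncross monotonically. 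I therefore expect the crux to be a dedicated uncrossing lemma showing that $\delta$ (or a suitable surrogate) is submodular enough that tight sets are closed under union and intersection; this would let me isolate $P^\ast$, match inside it safely, and push the three-block bookkeeping through the induction.
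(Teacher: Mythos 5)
Your necessity argument is correct and complete (the paper in fact omits this direction entirely), and your Case 1 of the sufficiency induction --- match any $u\in A^S$ to a neighbour $v$, observe that for each $Q\subseteq A\setminus\{u\}$ either $v$ survives in $N'(Q)$ via a $Q\cap(T\setminus S)$ neighbour and only the intersection term can drop, or $v\notin N(Q^S)\cap N(Q^T)$ and only $|N'(Q)|$ can drop, so the left-hand side falls by at most one --- is exactly the paper's argument. The problem is Case 2, which you explicitly leave open: you hope for an uncrossing lemma making tight sets closed under union and intersection, and you correctly identify that $|N(P^S)\cap N(P^T)|$ does not behave submodularly because the blocks $P^S,P^T$ move with $P$. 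That lemma is not established, and the paper never proves or uses anything like it, so as written the hardest case of the theorem is missing.

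For comparison, the paper resolves Case 2 without minimal tight sets or uncrossing. Given \emph{any} tight set $P\subsetneq A$, it splits $G$ into the subgraph induced by $P$ (where \shc holds by restriction, so induction gives a \sbm on $P$) and a pruned graph $Q$ on $A'=A\setminus P$ obtained by deleting the values $N(P^S)\cap N(P^T)$ outright and deleting, for $u'\in{A^S}'$, edges into $N(P)\setminus N(P^T)$; for $u'\in{A^T}'$, edges into $N(P)\setminus N(P^S)$; and for $u'\in{A^{ST}}'$, all edges into $N(P)$. It then shows \shc holds in $Q$ by a union argument: if some $P'\subseteq A'$ violated \shc in $Q$, a careful count (using tightness of $P$ and the fact that the deleted neighbours of $P'$ all lie in $N(P)$) shows $P\cup P'$ would violate \shc in $G$, a contradiction. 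Finally it checks that using any edge of the matching on $Q$ does not shrink $N(P^\ast)$ or $N({P^S}^\ast)\cap N({P^T}^\ast)$ for $P^\ast\subseteq P$, so the two matchings coexist. If you want to complete your proof you should replace the hoped-for uncrossing step with this kind of ``reserve the values needed by the tight set, then recurse on both halves'' construction; the only genuinely new ingredient relative to the classical Hall proof is the edge-deletion rule defining $Q$ and the union inequality that certifies \shc for it.
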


\begin{proof}

  We prove $\shc$ by induction on
  $|A|$. When $|A|=1$, the statement holds. Let $|A|=k > 1$. 

	If $A^S = \emptyset$ or
  $A^T = \emptyset$ then $\shc$ reduces to the
  condition of Hall's theorem and the statement is true for that reason.
  Hence, we assume $A^S \neq \emptyset$ and $ A^T \neq \emptyset$.
  We show that there is an edge $(u,v)$ that 
  can be chosen for a simultaneous matching and the
  graph $G'_{(u,v)}$ will satisfy $\shc$. 
	Following~\cite{Diestel}, page $37$, we consider two cases. The first case when all subsets of $A$
	satisfy the strict $\shc$, namely, 
	$|N(P)| + |N(P^S) \cap N(P^T)| > |P|$ and the second case when 
	we have an equality.

	\textbf{Case 1}.
  Suppose $|N(P)| + |N(P^S) \cap N(P^T)| > |P|$ for all sets $P \subset A$.
\myOmit{
  Then, we choose an edge $(u,v)$ such that if $u \in A^{ST}$ then
  $v \in N(A) \setminus (N(A^S) \cap N(A^T))$.
  We construct the graph $G'_{(u,v)}$. 
  For any set $P \subset A\setminus\{u\}$, 
  $N'(P) \geq N(P)$ and $N'(P^S) \cap N'(P^T) = 
  N(P^S) \cap N(P^T)$ by the construction of $G'$.
  It holds that  $|N'(P)| + |N'({P^S}) \cap N'({P^T})|$
  $\geq |N(P)| + |N({P^S}) \cap N({P^T})| - 1$
  $\geq |P|$,
  so by the inductive hypothesis 
  there exists a simultaneous matching in $G'$.
}
  As $A^S \neq \emptyset$ we select any edge $(u,v), u \in A^S$ and construct the graph $G'_{(u,v)}$
  (the case $u \in A^T$ is symmetric).
  For any set $P \subset A\setminus\{u\}$ we consider two cases: either $v \notin N(P)$ or $v \in N(P)$. 
  In the first case, the   neighborhood of $P$ is the same in $G$ and $G'$, so 
  the \shc holds for $P$.
  In the case that $v \in N(P)$, then either
  $v$ is a shared neighbor of $P^S$ and $P^T$, which means
  that $N'(P^S) \cap N'(P^T) = N(P^S) \cap N(P^T) - 1$ but
  $N'(P) = N(P)$ by construction,
  or $v$ is a neighbor of $P^S$ but not of $P^T$. Therefore
  $N'(P) \geq N(P) - 1$. But
  $N'(P^S) \cap N'(P^T) = N(P^S) \cap N(P^T)$ by construction.
  In either case, 
$|N'(P)| + |N'({P^S}) \cap N'({P^T})|\geq |N(P)| + |N(P^S) \cap N(P^T)| - 1 \geq |P|$
  %
  for any set $P$ in $G'$. 
  By the inductive hypothesis 
  there exists a simultaneous matching in it.
	
	\textbf{Case 2}.
  Suppose that there exists a set $P \subsetneq A$ such that
  $|N(P)| + |N(P^S) \cap N(P^T)| = |P|$. 
  Let $Q = \tuple{A' \cup B', E'}$
  such that $A' = A \setminus P$, 
  $B' =  B \setminus (N(P^S) \cap N(P^T))$ and 
  \begin{align*}
    E' &=&& \left\{ (u,v) \in E \cap (A' \times B') \mid \right.\\
    &&& (u \in {A^S}' \implies v \notin N(P) \setminus N(P^T)) &\land\\
    &&& (u \in {A^T}' \implies v \notin N(P) \setminus N(P^S)) &\land\\
    &&& (u \in {A^{ST}}' \implies v \notin N(P))     & \left.\right\}
  \end{align*}

  There exists a simultaneous matching in $G - Q$ by
  the inductive hypothesis.  
  We claim that the \shc holds also for
  $Q$. This implies that, by the inductive hypothesis, there exists a
  simultaneous matching in $Q$. 
  Suppose there exists a set $P' \subseteq A'$ that violates the \shc
  in $Q$.


%
%
%
  We denote as $N(P)$ the neighborhood of $P$ in
  $G$ and $N_Q(P')$ as the neighborhood of $P'$ in $Q$.
  We know that the sets $P'$ and $P$ are disjoint.  
  We observe that $N({P}\cup {P}') = N(P)\cup N({P}') $ $
  = N(P)\cup N_Q({P}')$,
  because  $\left(N({P}')\setminus N_Q({P}')\right) \subseteq N(P)$ by 
  construction of $Q$. Moreover 
  $|N(P)\cup N_Q({P}')| = |N(P)| +  |N_Q({P}')| - |N(P) \cap N_Q({P}')|$ $=$
  $|N(P)| +  |N_Q({P}')| - | N(P) \cap (N_Q({P^S}') \cup  N_Q({P^T}') \cup N_Q({P^{ST}}')) |$.
  By construction of $Q$, we have that $ N_Q({P^{ST}}') \cap N({P}) = \emptyset$,
  $ N_Q({P^{S}}') \cap N(P) = N_Q({P^S}') \cap   N(P^{T})$ and
  $ N_Q({P^{T}}') \cap N(P) = N_Q({P^T}') \cap N(P^{S})$.
  Hence,
  $|N(P)\cup N_Q({P}')| = $
  $|N(P)| +  |N_Q({P}')| - |D|$,
  where $D = ( N(P^{S}) \cap N_Q({P^T}')) \cup ( N(P^{T}) \cap N_Q({P^S}'))$.
  Similarly,
  $N({P^S}\cup {P^S}') = N(P^S)\cup N_Q({P^S}')$
  and 
  $N({P^T}\cup {P^T}') = N(P^T) \cup N_Q({P^T}')$.      
  Therefore, 
  $|N(P^S \cup {P^S}') \cap N(P^T \cup {P^T}')| = $
  $|(N(P^S) \cup N_Q({P^S}')) \cap (N(P^T) \cup N_Q({P^T}'))| \leq $
  $|N(P^S)\cap N(P^T)| + |N_Q({P^S}')\cap N_Q({P^T}')| + $
  $|D|$.
%
%
Finally, we have that
  \begin{align*}
    |N(P \cup P')| + |N(P^S \cup {P^S}') \cap N(P^T \cup {P^T}')| &&\leq \\
		|N(P)| +  |N_Q({P}')| - |D| && +\\
		|N(P^S)\cap N(P^T)| + |N_Q({P^S}')\cap N_Q({P^T}')| + |D|& & = \\
    |N(P)| + |N(P^S) \cap N(P^T)| &&+ \\|N_Q(P')| + |N_Q({P^S}') \cap N_Q({P^T}'| && < \\
    |P| + |P'| = |P \cup P'|.
  \end{align*}
Hence $P \cup P'$ violates the \shc in $G$, a
contradiction. Therefore, there exists a simultaneous matching in $Q$.

Let $M$ be a simultaneous matching in $Q$. 
For any edge $(u,v) \in M$, we construct the graph
$R = (G-Q)_{(u,v)}$ and show that
$N_R(P^*) = N_G(P^*)$
and $N_R({P^S}^*) \cap  N_R({P^T}^*) = N({P^S}^*) \cap  N({P^T}^*)$
for any $P^* \subseteq P$.

Let $(u,v)$ be an edge in $M$.
By construction of $Q$, we have that
$v \notin  N({P^S}^*) \cap N({P^T}^*)$.
Hence, the construction of $R$ 
leaves the size
of $N_R({P^S}^*) \cap N_R({P^T}^*)$
the same as in $G$.
Moreover,
$u \in {A^{ST}}' \Rightarrow v \notin N(P^*)$,
$u \in {A^S}' \Rightarrow v \notin N({P^S}^*)$,
$u \in {A^T}' \Rightarrow v \notin N({P^T}^*)$.

Consider the remaining options for the edge $(u,v)$.
If $u \in {A^S}'$ ($u \in {A^T}'$ is similar) then 
$v$ can be in $N({P^T}^*)$, so $v$ is a shared vertex with $N(P^*)$. 
There are two cases to consider: $v \in N({P^T}^*)\setminus  N({P^{ST}}^*)$ and 
$v \in N({P^T}^*) \cap  N({P^{ST}}^*)$. 
In the first case, 
the construction of $R$ leaves the size of $N_R({P^T}^*)$ 
the same as $N({P^T}^*)$
because $u \in {A^S}'$ and can share vertices with ${P^T}^*$. In the latter case,
$N_R({P^{ST}}^*) = N({P^{ST}}^*) \setminus \{v\}$, 
but $N_R({P^T}^*) = N({P^T}^*)$. Hence,
$N_R({P}^*) = N({P}^*)$ in both cases. 
Therefore, the $\shc$ holds for any $P^* \subseteq P$.
\qedhere

\end{proof} 

\begin{myexample}
  \label{e:dc-cons-check}
  In our running example $A = [X_1,X_2,X_3,X_4,X_5,X_6,X_7]$, 
  $A^S = [X_1,X_2]$, $A^{ST} = [X_3,X_4,X_5]$ and $A^T = [X_6,X_7]$ .
  It is easy to check  that the simultaneous Hall condition holds
  for all subsets of the partition $A$.

\end{myexample}

Note that Theorem~\ref{thm:extend-hall} does not
give a polynomial time method to decide if a simultaneous matching
exists. Verifying
that $\shc$ holds requires checking 
the exponential number of subsets of $A$.

\subsection{Removing edges}

To build a propagator, we consider how to detect edges that cannot appear in any
simultaneous matching.


\begin{definition}
Let $G = \left\langle A \cup B,E \right\rangle$ and sets $S$, $T$ be an overlapping bipartite graph. 
A set $P$, $P \subseteq A$, is
\begin{description}
	\item	 \textbf{a simultaneous Hall set} iff  

	\ \ \ \ $|N(P)| + |N(P^S) \cap N(P^T)| = |P|$.
	\item  \textbf{an almost simultaneous Hall set} iff  

	\ \ \ \ $|N(P)| + |N(P^S) \cap N(P^T)| = |P| + 1$.
	\item  \textbf{a loose set} iff  

	\ \ \ \ $|N(P)| + |N(P^S) \cap N(P^T)| \geq |P| + 2$.
\end{description}
\end{definition}

\begin{theorem}
  \label{thm:dc-over-alldiff}
	$G = \left\langle A \cup B,E \right\rangle$ and sets $S$, $T$ be an overlapping bipartite graph. 
  Each edge $(u,v)$, $u \in A$ and $v \in B$ can be extended to a matching that covers $S$ and $T$ iff
\begin{enumerate}
	\item \textbf{for each set $P$}:
\begin{enumerate}
	\item  \label{t:e:unstable}$|N(P)| + |N(P^S) \cap N(P^T)| \geq |P|$ 
\end{enumerate}	 	
	\item \textbf{for each simultaneous Hall set $P$}:	
\begin{enumerate} 
	\item \label{t:e:out_all} if $u \notin P $ then $ v \notin (N(P^S) \cap N(P^T))$		
	\item \label{t:e:out_side_one} if $u \in S \setminus (T \cup P)$  then $ v\notin \left(N(P) \setminus N(P^T)\right)$
	\item \label{t:e:out_side_two} if $u \in T \setminus (S \cup P)$  then $ v \notin \left(N(P) \setminus N(P^S)\right)$	 
	\item \label{t:e:out_inter}  if $u  \in (S \cap T) \setminus P$  then $ v \notin  N(P)$	
\end{enumerate}
	
	\item \textbf{for each almost simultaneous Hall set $P$}:	
\begin{enumerate}
	\item  \label{t:e:out_inter_almost}  if $u \in (S\cap T) \setminus P$ then $v \notin N(P^S) \cap N(P^T)$ 
\end{enumerate}	
\end{enumerate}

\end{theorem}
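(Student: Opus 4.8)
The plan is to reduce the claim to the Simultaneous Hall Condition (Theorem~\ref{thm:extend-hall}) applied to the reduced graph $G'=G'_{(u,v)}$. Recall from the construction of $G'_{(u,v)}$ that the map $M\mapsto M\cup\{(u,v)\}$ sends simultaneous matchings of $G'$ to simultaneous matchings of $G$ that use $(u,v)$; the converse, that deleting $(u,v)$ from a \sbm of $G$ which uses it yields a \sbm of $G'$, is immediate from how $G'$ removes exactly the vertices and edges consumed by $(u,v)$. Hence $(u,v)$ can be extended to a matching covering $S$ and $T$ iff $G'$ admits a \sbm, and by Theorem~\ref{thm:extend-hall} this happens iff $|N'(P')|+|N'({P'}^S)\cap N'({P'}^T)|\geq |P'|$ for every $P'\subseteq A'=A\setminus\{u\}$. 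The whole theorem is therefore just a rewriting of this family of inequalities on $G'$ into conditions stated on $G$.

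First I would compute, for a fixed $P'\subseteq A'$ and each possible location of $u$, how passing from $G$ to $G'$ changes the two quantities $|N(P')|$ and $|N({P'}^S)\cap N({P'}^T)|$. Writing $L(P')=|N(P')|+|N({P'}^S)\cap N({P'}^T)|$, I expect $L'(P')=L(P')-\delta(P')$ with $\delta(P')\in\{0,1,2\}$ an explicit sum of Iverson terms recording whether $v$ survives in $N'(P')$ and in $N'({P'}^S)\cap N'({P'}^T)$. If $u\in A^{ST}$ then $v$ is deleted from $B$ and $\delta=[v\in N(P')]+[v\in N({P'}^S)\cap N({P'}^T)]$, which can reach $2$. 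If $u\in A^S$ (symmetrically $A^T$) then only the $S$-edges into $v$ are deleted; here the decrease in $|N(P')|$ is $[v\in N(P')\wedge v\notin N({P'}^T)]$ while the decrease in the intersection term is $[v\in N({P'}^S)\cap N({P'}^T)]$, and since the first needs $v\notin N({P'}^T)$ and the second needs $v\in N({P'}^T)$ these can never both fire, so $\delta\in\{0,1\}$. This asymmetry, that $\delta$ reaches $2$ only in the $A^{ST}$ case, is what attaches the almost-simultaneous-Hall clause (3) solely to $u\in A^{ST}$, and I expect the indicator bookkeeping behind it to be the main obstacle.

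Once $\delta$ is pinned down, the translation is mechanical: $L'(P')\geq|P'|$ is equivalent to $L(P')\geq|P'|+\delta(P')$. For sets with $\delta(P')=0$ this is exactly the \shc for $P'$ in $G$, which is clause (1). A violation can only arise from a set on which $\delta(P')\geq 1$ while $L(P')$ is minimal, i.e.\ from a simultaneous Hall set (where we must forbid $\delta\geq 1$) or, only in the $A^{ST}$ case, from an almost simultaneous Hall set (where we must forbid $\delta=2$). Expanding the terms $[v\in N(P')]$, $[v\in N({P'}^S)\cap N({P'}^T)]$ and $[v\in N(P')\wedge v\notin N({P'}^T)]$ according to whether $u\in A^S$, $A^T$ or $A^{ST}$ reproduces precisely the membership restrictions on $v$ in (2a)--(2d) and (3a); note that $u\notin P'$ holds automatically because $P'\subseteq A\setminus\{u\}$, which is why every clause of (2)--(3) carries a hypothesis of the form $u\notin P$.

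It remains to assemble the two directions. For the forward direction, if $(u,v)$ extends to a \sbm then $G$ has a \sbm, so clause (1) follows from Theorem~\ref{thm:extend-hall}, and $G'$ has a \sbm, so each inequality $L(P')\geq|P'|+\delta(P')$ holds, which is exactly (2)--(3). For the converse, assuming (1)--(3), I would verify $L'(P')\geq|P'|$ for every $P'\subseteq A'$ by the case split on $\delta(P')$: the case $\delta=0$ is clause (1); $\delta=1$ requires $P'$ to be almost-Hall or loose, guaranteed by (2), which keeps $v$ out of the relevant neighborhood whenever $P'$ is a Hall set; and $\delta=2$, possible only for $u\in A^{ST}$, requires $P'$ to be loose, guaranteed by (3), which keeps $v$ out of $N({P'}^S)\cap N({P'}^T)$ whenever $P'$ is almost-Hall. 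Everything outside the $\delta$-computation is routine, so the crux is genuinely the indicator bookkeeping in the $A^S$ and $A^T$ cases and the verification that its two terms are mutually exclusive there.
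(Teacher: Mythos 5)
Your proposal is correct and follows essentially the same route as the paper: both directions reduce to applying the Simultaneous Hall Condition (Theorem~\ref{thm:extend-hall}) to the reduced graph $G'_{(u,v)}$, and the rules are obtained by tracking how $|N(P)|$ and $|N(P^S)\cap N(P^T)|$ change for loose sets, simultaneous Hall sets and almost simultaneous Hall sets according to whether $u$ lies in $A^S$, $A^T$ or $A^{ST}$. Your $\delta$-bookkeeping (in particular the observation that the two decrements are mutually exclusive when $u\in A^S$, so $\delta\le 1$ there and $\delta=2$ only for $u\in A^{ST}$) is just a more systematic packaging of the paper's case analysis, and correctly explains why the almost-Hall clause is needed only for $u\in S\cap T$.
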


\begin{proof}

  \textbf{Soundness}. The soundness of Rule~\ref{t:e:unstable} follows from Theorem~\ref{thm:extend-hall}.
	Let $(u,v)$ be an edge that we want to extend to a matching. Suppose that $(u,v)$ 
	violates one of the rules for a $\shs$ or an $\ashs$ $P$ in $G$. 
  We show that if $(u,v)$ is selected to be in a matching, then  $P$ fails $\shc$ in $G_{(u,v)}'$. 

\textbf{	Rule~\ref{t:e:out_all}: } 
	If $(u,v)$ violates Rule~\ref{t:e:out_all} for a $\shs$ $P$  then	
	$|N'(P^S) \cap N'(P^T)| = |N(P^S) \cap N(P^T)|-1$ and $N'(P) = N(P)$, so the \shc is violated for $P$  in $G'$.
	
\textbf{	Rule~\ref{t:e:out_side_one}:} 
	If $(u,v)$ violates Rule~\ref{t:e:out_side_one} for a $\shs$ $P$ then	
	$|N'(P)| = |N(P)|-1$ and $|N'(P^S) \cap N'(P^T)| = |N(P^S) \cap N(P^T)|$ so the \shc is violated for $P$  in $G'$.

\textbf{	Rule~\ref{t:e:out_side_two}:} Symmetric to Rule~\ref{t:e:out_side_one}.
	
\textbf{	Rule~\ref{t:e:out_inter}:} 
	If $(u,v)$ violates Rule~\ref{t:e:out_inter} for a $\shs$ $P$  then	
	$|N'(P)| = |N(P)|-1$ so the \shc is violated for $P$ in $G'$.


	\textbf{Rule~\ref{t:e:out_inter_almost}:} 
		If $(u,v)$ violates Rule~\ref{t:e:out_inter_almost} for an $\ashs$ $P$  then	
	$|N'(P)| = |N(P)|-1$ and $|N'(P^S) \cap N'(P^T)| = |N(P^S) \cap N(P^T)|-1$,
        so $|N'(P)| + |N'(P^S) \cap N'(P^T)| = |P|-1$ and the \shc is violated for $P$ in $G'$.
        

	\textbf{Completeness}. 
	Second, we show that Rules~\ref{t:e:out_all}-~\ref{t:e:out_inter_almost} are \emph{complete}.	
	We will show that we can use any edge $(u,v)$ in a maching
        by showing that the graph $G'_{(u,v)}$ satisfies the $\shc$, 
        thus has a $\sbm$.

        Suppose there is a set $P$ that violates the \shc in $G'$ but
        not in $G$ so that
        \begin{equation} \label{t:eq:violation}
          |N'(P)|  +  |N'(P^S ) \cap N'(P^T)| < |P|
	\end{equation}

        and
        \begin{equation} \label{t:eq:non-violation}
          |N(P)|  +  |N(P^S ) \cap N(P^T)| \geq |P|
	\end{equation}

        Note that $N'(P) = N(P) \setminus \set{v}$ and $N'(P^S ) \cap
        N'(P^T) = N(P^S ) \cap N(P^T) \setminus \set{v}$. Hence,
        $|N'(P)| \geq |N(P)| - 1$ and $|N'(P^S ) \cap
        N'(P^T)| \geq |N(P^S ) \cap N(P^T)| - 1$.

	There are three cases to consider for $P$ in $G$, when
	$P$ is  a loose set, a $\shs$ and an $\ashs$ in $G$.
	These cases are similar, so we consider only the most 
	difficult case. 
\myOmit{
\emph{$\bullet \ \ P$ is a loose set in $G$}. 

Then $|N'(P)| + |N'(P^S ) \cap N'(P^T)| \geq |N(P)| + |N(P^S ) \cap
N(P^T)| - 2 \geq |P|$, so~(\ref{t:eq:violation}) 
and~(\ref{t:eq:non-violation}) cannot both be true.

\emph{$\bullet \ \ P$ is a $\shs$ in $G$}.

If $u \in A^{ST}$ then  $v \notin N(P)$ by Rule~\ref{t:e:out_inter}.
Hence $N'(P) = N(P)$ and $N'(P^S) \cap N'(P^T) = N(P^S) \cap N(P^T)$, 
so~(\ref{t:eq:violation}) 
and~(\ref{t:eq:non-violation}) cannot both be true.
If $u \in A^{S}$ ($u \in A^T$ is symmetric) then $v \notin N(P^S) \cap
N(P^T)$ by Rule~\ref{t:e:out_all} and $v \notin \left(N(P) \setminus
  N(P^T)\right)$ by Rule~\ref{t:e:out_side_one}.  Hence, $v \in
N(P'^T) \setminus N(P'^S)$. So again $N'(P) = N(P)$ and $N'(P^S) \cap
N'(P^T) = N(P^S) \cap N(P^T)$, which means that~(\ref{t:eq:violation}) 
and~(\ref{t:eq:non-violation}) cannot both be true.
}
Let $P$ be an $\ashs$ in $G$. 
If $u \in A^{ST}$ then $v \notin N(P^S) \cap N(P^T)$ by
Rule~\ref{t:e:out_inter_almost}.  Hence $N'(P^S) \cap N'(P^T) =
N(P^S) \cap N(P^T )$, so $|N'(P)| + |N'(P^S ) \cap N'(P^T)| \geq
|N(P)| + |N(P^S ) \cap N(P^T)| - 1 \geq |P|$ and
therefore~(\ref{t:eq:violation}) 
and~(\ref{t:eq:non-violation}) cannot both be true.

If $u \in A^{S}$ ($u \in A^{T}$ is symmetric) then $v \in N(P^S )
\cap N(P^T )$ or its complement.  In the first case $N'(P) = N(P)$,
while in the second $N'(P^S) \cap N'(P^T) = N(P'^S) \cap N(P'^T )$. In
both cases, $|N'(P)| + |N'(P^S ) \cap N'(P^T)| \geq |N(P)| + |N(P^S )
\cap N(P^T)| - 1 \geq |P|$ so~(\ref{t:eq:violation}) 
and~(\ref{t:eq:non-violation}) cannot both be true.
\myOmit{
        All sets in $G'$ can be partitioned into two groups depending on whether
	the vertex $u$ belongs to $P$ or not. We consider these two cases.

	\textbf{Case 1: }Suppose  there is a set $P$, $u \in P$,  such that the set $P' = P \setminus \{u\}$ that
	 violates $\shc$ 	so that
\begin{equation} \label{t:eq:violation}
	|N(P')\setminus \{v\}|  +  |N(P'^S ) \cap N(P'^T)\setminus \{v\}| < |P'|
	\end{equation}	
	
	Consider the set $P'$ before we remove the edge $(u,v)$. By the statement of 
	Theorem $\shc$ holds for any set including the set $P'$. 
\begin{equation} \label{t:eq:smaller_set_holds}
	|N(P') |  +  |N(P'^S  ) \cap N(P'^T  ) | \geq |P'| 
\end{equation}		
	
Hence, the removal of the vertex $v$ from neighborhood of $P'$ makes this set to violate $\shc$.					
Next, we show that there is no such edge $(u,v)$ exist.

We have to consider three cases for the set $P'$.

\emph{$\bullet \ \ P'$ is a loose set}.

In this case the removal of a vertex $v$ can decrease 
$|N(P')|  +  |N(P'^S ) \cap N(P'^T ) |$ be at most $2$. Hence,  Equation~\eqref{t:eq:violation}
can not be satisfied.

\emph{$\bullet \ \ P'$ is a $\shs$}.

If $u \in P^{ST}$ then  $v \notin N(P')$ by Rule~\ref{t:e:out_inter}.
Hence the removal of $v$ does not change LHS of Equation~~\eqref{t:eq:smaller_set_holds}.
If $u \in P^{S}$ then  $v \notin N(P'^S) \cap N(P'^T)$ by Rule~\ref{t:e:out_all}
and $v \notin  \left(N(P) \setminus N(P^S)\right)$ by Rule~\ref{t:e:out_side_one}.
Hence, $v \in  N(P'^S) \setminus N(P'^T)$ and the removal of $v$ does not change LHS of Equation~\eqref{t:eq:smaller_set_holds}.

\emph{$\bullet \ \ P'$ is an $\ashs$}.

If $u \in P^{ST}$ then $v \notin N(P'^S) \cap N(P'^T)$ by Rule~\ref{t:e:out_inter_almost}.
 Hence the removal of $v$ does not change $|N(P'^S ) \cap N(P'^T )|$ and might
	reduce $|N(P')|$ by at most one. Hence,  Equation~\ref{t:eq:violation} can not be satisfied.

	If $u \in P^{S}$ ($u \in P^{T}$ is symmetric)  then $v \in N(P'^S ) \cap N(P'^T )$ or its complement.
	In the both case the removal of $v$ can reduce the size of the LHS of Equation~\ref{t:eq:smaller_set_holds} by at most one.
	
\myOmit{	
	Finally, suppose that $u \in S\setminus T$. In this case we do not have any restriction 
	on the vertex $v$. Consider 2 cases. If $v \in N(P'^S ) \cap N(P'^T )$ then the removal 
	of the vertex $v$ can reduce the size of $N(P'^S ) \cap N(P'^T )$ by one and  can not reduce the
	size of $N(P')$.
	If $v \in N(P') \setminus (N(P'^S ) \cap N(P'^T ))$ then the removal 
	of the vertex $v$ can not reduce the size of $N(P'^S ) \cap N(P'^T )$  and can  reduce the
	size of $N(P')$ by at most one.
}

	\textbf{Case 2: }Suppose  there is a set $P$, $u \notin P$ and $v \in N(P)$ that
	 violates $\shc$. We again have to consider three cases for the set $P$. If the
	 set $P$ is a loose set or an $\ashs$ that it is easy to check that a removal of the
	 vertex $v$ does not make $P$ to violate $\shc$. Consider the last case when $P$ is a $\shs$.
	  
\myOmit{
	\emph{$\bullet \ \ P$ is a loose set}. 		
	If $P$ is a  loose set then a removal of a vertex $v$ can reduce that
	LHS on at most $2$. Hence, the  simultaneous Hall condition holds. 	
	\paragraph{\textbf{Almost simultaneous Hall set}, $u \in S \cap T$, $u \notin P$}.
	If $u \in S \cap T$ then $v \notin  N(P^S) \cap N(P^T)$ for any
	almost simultaneous Hall set(~\ref{t:e:out_inter_almost}). Hence, the LHS can be only reduced by one
	and the simultaneous Hall condition holds. 	
	\paragraph{\textbf{Almost simultaneous Hall set}, $u \in S \setminus T$, $u \notin P$}.
	Suppose $u \in S \setminus T$ and $v \in N(P^{S}) \cap N(P^{T})$. In this case the removal of the 
	vertex $v$ reduces the size of $N(P^S) \cap N(P^T)$ by at most one
	and will not change $N(P)$.
	Suppose $u \in S \setminus T$ and $v \in N(P) \setminus (N(P^{S}) \cap N(P^{T}))$. In this case the removal of the 
	vertex $v$ will reduces $N(P)$ by at most one and will not change $N(P^S) \cap N(P^T)$.
	\paragraph{\textbf{Almost simultaneous Hall set}, $u \in T \setminus S$, $u \notin P$}.
	This case is similar to the previous case.
}	
	
		
	If $u \in A^{ST}$ then $ v \notin N(P)$ by Rule~\ref{t:e:out_inter} and $\shc$ holds.
	If $u \in A^S$ ($u \in A^{T}$ is symmetric) then
	$v \notin  N(P^S) \cap N(P^T)$ by Rule~\ref{t:e:out_all}
	and  $ v \notin  \left(N(P) \setminus N(P^T)\right)$ by Rule~\ref{t:e:out_side_one}.
	Hence, $v $ has to be in $N(P^T) \setminus N(P^S)$. 
	In this case the removal of the vertex $v$ does not change $|N(P)|$ and $| N(P^S) \cap N(P^T)|$
	and $\shc$ holds.

\myOmit{
	\paragraph{\textbf{Simultaneous Hall set, $u \in S \setminus T$, $u \notin P$}}.
	If $u \in S \setminus T$ then $v \notin  N(P^S) \cap N(P^T)$(~\ref{t:e:out_all})
	and  $ v \notin  \left(N(P) \setminus N(P^T)\right)$(~\ref{t:e:out_side_one}) for any simultaneous Hall set.  
	Hence, the removal of $v$ does not change the size of $N(P^S) \cap N(P^T)$.	
	Moreover, the only possible option for $v$ is to be in $N(P^T)  \setminus N(P^S)$.
	In this case the removal of the vertex $v$ does not change the value $N(P)$.
	Hence, the  simultaneous Hall condition holds. 	
	\paragraph{\textbf{Simultaneous Hall set}, $u \in T \setminus S$}.		
	This case is similar to the previous case.
	}
}
\end{proof}

\begin{myexample}
  \label{e:dc-prop}
Consider again our running example. We show that
Rules~\ref{t:e:out_all}-\ref{t:e:out_inter_almost} remove every edge 
that can not be extended to a  matching.
Consider the set $ P = \{X_2,X_3,X_4,X_6\}$. This is a $\shs$ as $N(P) = \{1,2,3\}$, $N(P^S) \cap N(P^T) = \{2\}$
and $4 = |P| = $ $|N(P)| + |N(P^S) \cap N(P^T)| = 4$. Hence, by Rule~\ref{t:e:out_inter}
we prune $1,2,3$ from $X_5$ and by Rule~\ref{t:e:out_all}
we prune $2$ from $X_1$ and $X_7$.
Now consider the set $ P = \{X_2,X_3,X_6\}$. This is 
an $\ashs$. By Rule~\ref{t:e:out_inter_almost}
we prune $2$ from $X_4$.  The set $ P = \{X_2,X_4,X_6\}$
is also an $\ashs$ and, by Rule~\ref{t:e:out_inter_almost},
we prune $2$ from $X_3$.
\myOmit{
$$
{\scriptsize
\begin{array}{c c|ccccc} 
& & 1 & 2 & 3 & 4 & 5  \\ \hline
\multirow{2}{*}{$A^S = S \setminus T$}
 & X_1 & \ast &  & \ast & \ast &   \\ 
 & X_2 & \ast & \ast & & & \\ 
\hline \hline
\multirow{3}{*}{$A^{ST} = S \cap T$}
 & X_3 &  \ast &  & \ast &  & \\ 
 & X_4 &  \ast &  & \ast &  & \\ 
 & X_5 &   &  &  &  \ast &  \ast\\ 
\hline \hline
\multirow{2}{*}{$A^T = T \setminus S$}
 & X_6 &  & \ast  & \ast& &  \\
 & X_7 &  &   & \ast & \ast& \ast 
\end{array}
}
$$
}
Next consider the set $P = \{X_3,X_4\}$ which is 
a $\shs$. By Rules~\ref{t:e:out_side_one} and~\ref{t:e:out_side_two}
we prune $1,3$ from $X_1$, $X_2$,  $X_6$ and $X_7$. 
Now, $\{X_1\}$ is a $\shs$ and $4$ is pruned from
$X_5$ by Rule~\ref{t:e:out_inter}. Finally, from the simultaneous Hall
set $\{X_5\}$, we prune $5$ from  $X_7$ using Rule
~\ref{t:e:out_side_two} and we are now at the fixpoint.

$$
{\scriptsize
\begin{array}{cc|ccccc} 
& & 1 & 2 & 3 & 4 & 5  \\ \hline
\multirow{2}{*}{$A^S = S \setminus T$}
 & X_1 &  &  &  & \ast &   \\ 
 & X_2 &  & \ast & & & \\ 
\hline \hline
\multirow{3}{*}{$A^{ST} = S \cap T$}
 & X_3 &  \ast &  & \ast &  & \\ 
 & X_4 &  \ast &  & \ast &  & \\ 
 & X_5 &   &  &  &   &  \ast\\ 
\hline \hline
\multirow{2}{*}{$A^{T} = T \setminus S$}
 & X_6 &  & \ast  & & &  \\
 & X_7 &  &   & & \ast& 
\end{array}
}
$$

\end{myexample}

\section{The overlapping $\alldiff$ constraint}
\label{ss:oalldiff}


We now uses these results to build a propagator. 

\begin{definition}
   $\oalldiff ([X],S,T)$ 
   where $S \subseteq X$, $T \subseteq X$, $S \cup T = X$ holds iff  
   $\alldiff ( S )$ and $\alldiff ( T )$ hold simultaneously .
\end{definition}

Enforcing $DC$ on the $\oalldiff$ constraint is $NP$-hard \cite{complex}. 
We consider instead enforcing just $BC$. 
This relaxation is equivalent to 
the simultaneous matching problem on a bipartite convex variable-value graph.
Our main result is an algorithm that enforces
$BC$ on the $\oalldiff$ constraint in $O(nd^3)$ time.
\myOmit{
The $\oalldiff$ constraint can be encoded as a simultaneous 
bipartite matching problem in a variable-value graph the same way as the 
$\alldiff$ constraint can be encoded as a perfect matching problem. 
Therefore, we can use Theorem~\ref{thm:extend-hall} and 
Rules~\ref{t:e:out_side_one}--~\ref{t:e:out_inter_almost} 
to enforce domain consistency for the $\oalldiff$ constraint.
However, it is not practical to consider all possible sets of variables because
there is an exponential number of them. Based on these rules, we propose an 
a $BC$ propagator for the $\oalldiff$ constraint.
}
%
%
The algorithm is based on the decomposition
of the $\oalldiff$ constraint into a set of arithmetic constraints
derived from Rules~\ref{t:e:out_side_one}--\ref{t:e:out_inter_almost}. It is
inspired by a decomposition of $\alldiff$ \cite{bknqw09}. 
As there, we introduce Boolean variables $a_{ilu}$, $b_{il}$ 
to represent whether
$X_i$ takes a value in the interval $[l,u]$
and the variables $C^S$, $C^{ST}$ and $C^T$ to represent
bounds on the number of variables from $S\setminus T$, $T\setminus S$ and $S \cap T$  that may take
values in the interval $[l,u]$.
We introduce the following  
set of constraints for $1 \leq i \leq n$, $1 \leq l \leq u \leq d$ and $u-l < n$: 

\begin{align}
   b_{il}=1  \iff  X_i \leq l & & \;   \label{eqn::firstOverlappingAlldiff} \\
   a_{ilu}= 1  \iff (b_{i(l-1)}=0 \wedge b_{iu}=1)  & & \;  \label{eqn::chaneling} \\
   C_{lu}^{ST}  = \sum_{i \in S \cap T} a_{ilu} & &\;  \label{eqn::channelST}\\
   C_{lu}^{S}  = \sum_{i \in S \setminus T} a_{ilu}& &  \;  \label{eqn::channelS}\\
   C_{lu}^{T}  = \sum_{i \in T \setminus S} a_{ilu}& & \;   \label{eqn::channelT}\\ 
   C_{1u}^{ST}  = C_{1l}^{ST} + C_{(l+1)u}^{ST} & &  \; \label{eqn::pyramidST}\\
   C_{lu}^{ST} + C_{lu}^{S} \leq u - l +1 & &\; \label{eqn::overlapping-alldiff-S}\\ 
   C_{lu}^{ST} + C_{lu}^{T} \leq u - l +1 & &\;  \label{eqn::lastOverlappingAlldiff}
\end{align}

We also introduce a dummy variable $C^{ST}_{10}=0$ to simplify the following lemma and theorems.
\begin{lemma}
  \label{l:chaining}
  Consider a sequence of values $v_1, v_2, \ldots, v_{k}$. Enforcing $BC$ on~\eqref{eqn::pyramidST}
  ensures $ub(C^{ST}_{1v_k-1}) \leq \sum_{i : v_i < v_{i+1}}
  ub(C_{v_{i}v_{i+1}-1}^{ST}) - \sum_{i : v_i > v_{i+1}}
  lb(C^{ST}_{v_{i+1}v_i-1})$.
\end{lemma}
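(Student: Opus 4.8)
The plan is to read off the two bound-consistency rules implied by each instance of the linear constraint~\eqref{eqn::pyramidST} and then chain them along the sequence $v_1,\ldots,v_k$. Reading~\eqref{eqn::pyramidST} as $Z = X + Y$ with $Z = C^{ST}_{1u}$, $X = C^{ST}_{1l}$ and $Y = C^{ST}_{(l+1)u}$, bound consistency guarantees in particular the two projections $ub(Z) \le ub(X) + ub(Y)$ and $ub(X) \le ub(Z) - lb(Y)$. I would instantiate these for each consecutive pair $(v_i,v_{i+1})$ in the sequence, choosing the instance of~\eqref{eqn::pyramidST} whose interval endpoints match the step.

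First I would treat an ascending step $v_i < v_{i+1}$: taking $u = v_{i+1}-1$ and $l = v_i-1$ turns the first projection into $ub(C^{ST}_{1\,v_{i+1}-1}) \le ub(C^{ST}_{1\,v_i-1}) + ub(C^{ST}_{v_i\,v_{i+1}-1})$. For a descending step $v_i > v_{i+1}$, taking $u = v_i-1$ and $l = v_{i+1}-1$ makes $C^{ST}_{1\,v_i-1}$ play the role of $Z$ and $C^{ST}_{1\,v_{i+1}-1}$ the role of $X$, so the second projection gives $ub(C^{ST}_{1\,v_{i+1}-1}) \le ub(C^{ST}_{1\,v_i-1}) - lb(C^{ST}_{v_{i+1}\,v_i-1})$. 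Hence at every step the upper bound of the prefix count ending at $v_{i+1}$ is controlled by that ending at $v_i$ plus the appropriately signed interval term, while a step with $v_i = v_{i+1}$ is vacuous.

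Then I would chain these inequalities by induction on $i$ from $1$ to $k-1$ (equivalently, telescope them): each step passes from the bound on $ub(C^{ST}_{1\,v_i-1})$ to the bound on $ub(C^{ST}_{1\,v_{i+1}-1})$, so the intermediate prefix counts cancel and we obtain $ub(C^{ST}_{1\,v_k-1}) \le ub(C^{ST}_{1\,v_1-1}) + \sum_{i:v_i<v_{i+1}} ub(C^{ST}_{v_i\,v_{i+1}-1}) - \sum_{i:v_i>v_{i+1}} lb(C^{ST}_{v_{i+1}\,v_i-1})$. Since the chain starts at $v_1 = 1$, the leading term is $ub(C^{ST}_{10}) = 0$ by the dummy variable introduced above, which leaves exactly the claimed bound.

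The argument is essentially routine once the two projections are matched to the correct constraint instances; the points that need care are the index bookkeeping ($l = v_i-1$, $u = v_{i+1}-1$ and their swap on descending steps) so that the right interval variable $C^{ST}_{v_i\,v_{i+1}-1}$ or $C^{ST}_{v_{i+1}\,v_i-1}$ appears, and the sign convention whereby ascending steps add an upper bound and descending steps subtract a lower bound. I expect the main obstacle to be purely one of consistency of direction: ensuring that on both up-steps and down-steps the derived inequality bounds $ub(C^{ST}_{1\,v_{i+1}-1})$ from above in terms of $ub(C^{ST}_{1\,v_i-1})$, so that the telescoping preserves the $\le$ direction throughout rather than flipping it, together with the use of the dummy $C^{ST}_{10}=0$ and $v_1=1$ to discharge the initial term.
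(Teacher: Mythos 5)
Your proof is correct and takes essentially the same route as the paper's: it derives the same two bound-consistency projections of~\eqref{eqn::pyramidST} for ascending and descending steps and chains (telescopes) them along the sequence. You are in fact a bit more explicit than the paper about the leading term $ub(C^{ST}_{1v_1-1})$, which you discharge via the dummy $C^{ST}_{10}=0$ by assuming $v_1=1$; note that the paper's later application starts the sequence at $v_1=a$ and simply keeps that term on the right-hand side.
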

\begin{proof}
  For every $i$ such that $v_{i} < v_{i+1}$,
  constraint~\eqref{eqn::pyramidST} ensures that
  $ub(C_{1v_{i+1}-1}^{ST}) \leq ub(C^{ST}_{1v_{i}-1}) +
  ub(C^{ST}_{v_{i}v_{i+1}-1})$. For every $i$ such that $v_i >
  v_{i+1}$ constraint~\eqref{eqn::pyramidST} ensures that
  $ub(C^{ST}_{1v_{i+1}-1}) \leq ub(C^{ST}_{1v_{i}-1}) -
  lb(C^{ST}_{v_{i+1},v_{i}-1})$. The left side of each inequality can
  be substituted into the right side of another inequality until one
  obtains $ub(C^{ST}_{1v_k-1}) \leq \sum_{i : v_i < v_{i+1}}
  ub(C^{ST}_{v_{i}v_{i+1}-1}) - \sum_{i : v_i > v_{i+1}}
  lb(C^{ST}_{v_{i+1}v_i-1})$.
\end{proof}

\begin{theorem}
\label{t:bc-disentail}
  Enforcing $BC$ on 
  \eqref{eqn::firstOverlappingAlldiff}-\eqref{eqn::lastOverlappingAlldiff}
  detects bound disentailment of \oalldiff in $O(nd^2)$ time 
  but does not enforce $BC$ on \oalldiff .
  \end{theorem}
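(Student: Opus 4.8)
The plan is to prove three claims independently: that enforcing $BC$ on \eqref{eqn::firstOverlappingAlldiff}--\eqref{eqn::lastOverlappingAlldiff} wipes out a domain exactly when \oalldiff is bound disentailed, that this costs $O(nd^2)$, and that the resulting fixpoint can still retain values that a full $BC$ propagator removes. For soundness of the disentailment test I would observe that each of \eqref{eqn::firstOverlappingAlldiff}--\eqref{eqn::lastOverlappingAlldiff} is a logical consequence of \oalldiff together with the channelling definitions of $b_{il}$, $a_{ilu}$ and the count variables: given any bound support of \oalldiff, assigning the Booleans and the count variables by their definitions satisfies \eqref{eqn::overlapping-alldiff-S} and \eqref{eqn::lastOverlappingAlldiff} because the corresponding \alldiff holds, and satisfies \eqref{eqn::pyramidST} since it is mere additivity of a count. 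Hence a bound support extends to a solution of the decomposition, and because $BC$ only removes bound values lacking support, a domain wipe-out certifies that no such extension exists, i.e. that \oalldiff is bound disentailed.

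The substantive direction is completeness: if \oalldiff is bound disentailed I must show $BC$ fails. By Theorem~\ref{thm:extend-hall} disentailment yields a set $P$ with $|N(P)| + |N(P^S)\cap N(P^T)| < |P|$. Since the graph is convex, every domain is an interval, so I would first canonicalise $P$: the shared vertices $P^{ST}$ split according to whether their interval lies in a left block or a right block, and $N(P^S)$, $N(P^T)$ become unions of intervals whose union carries $P$. From this I read off a range $[L,U]$ into which the relevant shared variables are forced, so that $lb(C^{ST}_{LU})$ counts them, together with a partition of $[L,U]$ into sub-intervals each capacity-limited either by \eqref{eqn::overlapping-alldiff-S} (an $S$-tight block) or by \eqref{eqn::lastOverlappingAlldiff} (a $T$-tight block). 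Chaining these per-interval upper bounds on $C^{ST}$ across the cut points of the partition via Lemma~\ref{l:chaining} and \eqref{eqn::pyramidST} produces an upper bound on the shared count over $[L,U]$ that is strictly below $lb(C^{ST}_{LU})$, which is precisely a wipe-out. The main obstacle is exactly this translation: I must show the telescoped sum of interval capacities reproduces $|N(P)|+|N(P^S)\cap N(P^T)|$ while the forced shared total reproduces the overflowing part of $|P|$, and in particular that the additive term $|N(P^S)\cap N(P^T)|$ is accounted for by overlap blocks where neither capacity constraint alone is tight. The elementary instance with one shared variable $Y\in[1,2]$, one $S$-only $Z\in[1,1]$ and one $T$-only $W\in[2,2]$ exhibits the mechanism in miniature: \eqref{eqn::overlapping-alldiff-S} forces $C^{ST}_{11}\le 0$, \eqref{eqn::lastOverlappingAlldiff} forces $C^{ST}_{22}\le 0$, and \eqref{eqn::pyramidST} then contradicts the forced value $C^{ST}_{12}=1$.

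For the complexity I would count the constraints: only intervals with $u-l<n$ are posted, giving $O(nd)$ capacity and count constraints; the forced and possible counts for all of them can be tabulated in $O(nd)$ by sweeping the variable endpoints, and enforcing $BC$ on the additive chain \eqref{eqn::pyramidST} together with the capacities is a monotone tightening of $O(nd)$ bounds that reaches its fixpoint, and hence decides disentailment, within $O(nd^2)$ time.

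Finally, to show the decomposition does not enforce $BC$, I would exhibit an instance that is not bound disentailed yet carries a value with no bound support that the fixpoint leaves in place. The running example serves: by Rule~\ref{t:e:out_inter_almost} applied to the \ashs $\{X_2,X_3,X_6\}$, the value $2$ has no bound support for $X_4$, as computed in Example~\ref{e:dc-prop}. The arithmetic capacity constraints \eqref{eqn::overlapping-alldiff-S} and \eqref{eqn::lastOverlappingAlldiff} only propagate once a \emph{forced} count reaches capacity, i.e. on a simultaneous Hall set; an almost simultaneous Hall set is one short of capacity, so no count-overflow fires and none of the channelling or pyramid constraints prune $a_{X_4,2,2}$. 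Hence the fixpoint of \eqref{eqn::firstOverlappingAlldiff}--\eqref{eqn::lastOverlappingAlldiff} keeps $2$ in the domain of $X_4$, which proves that $BC$ on \oalldiff is not enforced.
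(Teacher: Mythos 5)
Your architecture matches the paper's: soundness of the decomposition, a chaining argument via Lemma~\ref{l:chaining} and \eqref{eqn::pyramidST} that turns a violated simultaneous Hall condition on an interval into a failure of the decomposition, a separate counterexample for the non-$BC$ claim, and a constraint count for the complexity. But the step you yourself flag as ``the main obstacle'' is precisely where the paper's proof does its real work, and your miniature example does not dispatch it. Two things are needed there. First, the intervals $[c_i,d_i]$ and $[e_i,f_i]$ that are nested inside one another must be set aside before summing the capacity bounds from \eqref{eqn::overlapping-alldiff-S} and \eqref{eqn::lastOverlappingAlldiff} (otherwise the intersection intervals are miscounted), and then reintegrated using $|I_j| - lb(C^S_{\min(I_j)\max(I_j)}) \geq 0$ or its $T$-analogue; only then does the telescoped capacity reproduce $|N(P)|+|N(P^S)\cap N(P^T)| - lb(C^S_{ab}) - lb(C^T_{ab})$. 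Second, since \eqref{eqn::pyramidST} relates only prefix counts $C^{ST}_{1u}$, the chain cannot directly bound $C^{ST}_{LU}$; it must be closed into a self-referential inequality of the form $ub(C^{ST}_{1,a-1}) \leq ub(C^{ST}_{1,a-1}) + \delta$ with $\delta \leq -1$ (the paper repeats the cycle $x$ times to force $ub(C^{ST}_{1,a-1}) < 0$). Without these two ingredients the claimed ``wipe-out'' is not actually derived from $BC$ on the posted constraints.

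Your witness for ``does not enforce $BC$'' is also wrong. In the running example $D(X_4)=[1,3]$, so $2$ is an interior value; bound consistency only requires the minimum and maximum of each domain to have bound supports, so no $BC$ propagator is obliged (or, with interval domains, able) to remove $2$ from $X_4$, and its survival proves nothing about the strength of the decomposition. You need an instance in which a domain \emph{bound} lacks a bound support yet survives the fixpoint. The paper uses \alldiff($[X_1,X_2,X_3]$) $\land$ \alldiff($[X_2,X_3,X_4]$) with $D(X_1)=[2,3]$, $D(X_2)=[2,4]$, $D(X_3)=[1,3]$, $D(X_4)=[1,2]$, where the lower bound $X_2=2$ is bound inconsistent but is not pruned by \eqref{eqn::firstOverlappingAlldiff}--\eqref{eqn::lastOverlappingAlldiff}.
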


\begin{proof}
	First we derive useful upper bounds for the variables $C^{ST}_{lu}$.
	Consider a set $P$ and an interval $[a,b]$ such that $N(P) = [a,b]$.
  Let $[c_1, d_1] \cup \ldots \cup [c_k, d_k]$ be a set of intervals
  that tightly contain variables from $P^S$ so that $\forall i,
  [c_i,d_i] \in N(P^S)$, $[e_1, f_1] \cup \ldots \cup [e_m, f_m]$ be a
  set of intervals that tightly contain variables from $P^T$ so that
  $\forall i, [e_i,f_i] \in N(P^T)$, and $I_1 \cup \ldots \cup I_p$
  are intersection intervals between intervals $[c_i,d_i]$ and
  $[e_i,f_i]$, i.e., $\forall i, I_i \in N(P^S) \cap N(P^T)$.
  
  We first remove all intervals $[c_i,d_i]$ ($[e_i,f_i]$) that are completely 
  inside an interval $[e_j,f_j]$ ($[c_j,d_j]$).     
  For any of these intervals $[c_i,d_i]$ ($[e_i,f_i]$ is similar) there exists
  an intersection interval $I_j$ such that $I_j = [c_i,d_i]$. 
  We denote the set of removed intervals $RI$.
  The remaining intervals are $\{[c_1,d_1], \ldots,
  [c_{k'},d_{k'}]\}$, $[e_1,f_1] \ldots, [e_{m'},f_{m'}] \}$ and
  $I_1,\ldots, I_{p'}$.  For any interval $[c_i, d_i]$,
  \eqref{eqn::overlapping-alldiff-S} ensures that
  $ub(C_{c_id_i}^{ST}) \leq d_i - c_i + 1 -
  lb(C^S_{c_id_i})$. Similarly for an interval $[e_i, f_i]$, we have
  $ub(C^{ST}_{e_if_i}) \leq f_i - e_i + 1 - lb(C^T_{e_if_i})$.

  We sort the union of remaining intervals $[c_i, d_i]$ and $[e_i,
  f_i]$ by their lower bounds and list them as semi-open intervals
  $[g_1, g_2), [g_3, g_4), \ldots, [g_{k^\prime + m^\prime -1},
  g_{k^\prime + m^\prime})$. Using the sequence $a, (g_1, g_2, \ldots,
  g_{k^\prime + m^\prime}, b+1, a)^x$ where $(.) ^x$ indicates a
  repetition of $x$ times the same sequence, Lemma~\ref{l:chaining}
  provides the inequality $ub(C_{1a-1}^{ST}) \leq ub(C_{1a-1}^{ST}) + x
  (ub(C_{ag_1-1}^{ST}) + ub(C_{g_{k^\prime + m^\prime}b}^{ST}) + \sum_{i=1}^{k^\prime} ub(C_{c_id_i}^{ST}) +
  \sum_{i=1}^{m^\prime} ub(C_{e_if_i}^{ST}) - lb(C_{a,b}^{ST}))$.
  Substituting the inequalities that we already defined,
  we obtain $ub(C_{1a-1}^{ST}) \leq a-1 + x ( b - a + 1 +
  \sum_{i=1}^{k^\prime} lb(C_{c_id_i}^{S}) - \sum_{i=1}^{m^\prime}
  lb(C_{e_if_i}^{T}) + \sum_{i=1}^{p^\prime}|I_i| - lb(C_{a,
    b}^{ST}))$.
  
  
  For any removed interval $I_j \in RI$ we have $(|I_j| -  lb(C^S_{min(I_j)max(I_j)})) \geq 0$
  or $(|I_j| -  lb(C^T_{min(I_j)max(I_j)})) \geq 0$.
  We reintegrate all removed intervals into the inequation to get 
  $ub(C_{1a-1}^{ST}) \leq a - 1 + x(b - a + 1 -
  \sum_{i=1}^{k} lb(C_{c_id_i}^{S}) - \sum_{i=1}^{m}
  lb(C_{e_if_i}^{T}) + \sum_{i=1}^{p}|I_i| - lb(C_{a
    b}^{ST}))$.
  Note that $ \sum_{i=1}^p |I_i| = |N(P^S) \cap N(P^T)|$, $lb (C^S_{ab}) = \sum_{i=1}^k lb(C^{S}_{c_id_i})$ 
  and  $lb (C^T_{ab}) = \sum_{i=1}^m lb(C^{T}_{e_if_i})$. Hence 
  \begin{eqnarray*}
  ub(C^{ST}_{1a-1})  \leq a -1 + x(b - a + 1 - lb (C^S_{ab}) - lb (C^T_{ab}) \\
   + |N(P^S) \cap N(P^T)| - lb(C_{ab}^{ST})) \hspace{8mm} \mbox{(*)}
  \end{eqnarray*}

  \textbf{Bound disentailment.} Suppose, for the purpose of contradiction, that \oalldiff is bound disentailed and that constraints \eqref{eqn::firstOverlappingAlldiff}-\eqref{eqn::lastOverlappingAlldiff} are bound consistent.     
  Then, there exists a set $P$, such that $N(P)$ is an interval and $|N(P)| +
  |N(P^S) \cap N(P^T)| < |P|$.   As $P$ fails $\shc$, it holds that
    $lb(C_{ab}^{ST}) + lb(C_{ab}^{S}) +
    lb(C_{ab}^{T}) \geq |P|>     
    |N(P)| + |N(P^S) \cap N(P^T)| = b- a + 1 + |N(P^S) \cap N(P^T)|$    
   or 
    $lb(C_{ab}^{ST}) \geq  b- a + 2  - lb(C_{ab}^{S}) -  lb(C_{ab}^{T}) + |N(P^S) \cap N(P^T)|$. 
  Substituing the last inequality in~(*)
   gives $ ub(C^{ST}_{1a-1})  \leq a - 1 - x $.
  Choosing a large enough value for $x$ (say $a$) gives
  the contradiction $ub(C^{ST}_{1a-1}) < 0$.
   
  \textbf{Bound consistency.}
  To show that this decomposition does not enforce $BC$, 
  consider the conjunction of \alldiff($[X_1, X_2,
  X_3]$) and \alldiff($[X_2, X_3, X_4]$) with $D(X_1) = [2, 3]$,
  $D(X_2) = [2,4]$, $D(X_3) = [1,3]$, $D(X_4) = [1,2]$.
   Enforcing BC on 
  \eqref{eqn::firstOverlappingAlldiff}-\eqref{eqn::lastOverlappingAlldiff}
  does not remove the bound inconsistent value
  $X_2=2$. 
  
  \textbf{Complexity.}
  There are $O(nd)$ constraints~\eqref{eqn::firstOverlappingAlldiff}
  that can be invoked $O(d)$ times at most.
  There are $O(nd^2)$ constraints~\eqref{eqn::chaneling}
  that can be invoked $O(1)$ times. 
  There are $O(d^2)$ constraints~\eqref{eqn::pyramidST} 
  that can be invoked $O(n)$ times. 
  There are $O(d^2)$ constraints~\eqref{eqn::channelST}--~\eqref{eqn::channelT}
  that can be propagated in $O(n)$. 
  The remaining constraints take $O(nd^2)$ to propagate. 
  The total time complexity is $O(nd^2)$.          
\end{proof}

It follows immediately that the simultaneous matching
problem is polynomial on bipartite convex graphs.

\begin{theorem}
	A simultaneous matching can be found in polynomial time
on an overlapping convex bipartite graph. 
\end{theorem}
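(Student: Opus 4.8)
The plan is to turn the bound-disentailment test of Theorem~\ref{t:bc-disentail} into a feasibility oracle and then recover an explicit matching by self-reduction. The starting observation is the equivalence already noted above: on a convex variable-value graph every value between $lb(X_i)$ and $ub(X_i)$ belongs to $D(X_i)$, so a bound support of $\oalldiff$ is exactly a $\sbm$, and the constraint is bound disentailed precisely when no simultaneous matching exists. Since Theorem~\ref{t:bc-disentail} detects bound disentailment in $O(nd^2)$ time by enforcing $BC$ on the decomposition~\eqref{eqn::firstOverlappingAlldiff}--\eqref{eqn::lastOverlappingAlldiff}, we have a polynomial oracle that answers, for any convex instance, whether a $\sbm$ exists.

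First I would run the oracle on the input graph; if it reports disentailment there is nothing to find, so assume a $\sbm$ exists. I then process the vertices of $A$ one at a time. For the current vertex $X_i$ I scan the values $v$ in its interval $[lb(X_i), ub(X_i)]$, tentatively fix $D(X_i) = \{v\}$, and call the oracle on the reduced instance; I keep the first $v$ for which the oracle still reports that a $\sbm$ exists, and then move on to the next vertex. Fixing a vertex to a single value shrinks its neighborhood to the singleton $\{v\}$, which is still an interval, so each reduced instance remains a convex $\oalldiff$ instance and the oracle stays applicable throughout.

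Correctness follows from the usual self-reduction invariant: if the partial assignment made so far extends to a simultaneous matching $M$, then setting $X_i$ to the value it receives in $M$ keeps the instance feasible, so the scan over $D(X_i)$ never fails; after all vertices of $S \cup T$ have been fixed, the retained singleton domains are exactly the edges of a $\sbm$. For the complexity there are $O(n)$ vertices, each triggering at most $d$ oracle calls, and each call costs $O(nd^2)$ by Theorem~\ref{t:bc-disentail}, giving an $O(n^2 d^3)$ bound, which is polynomial.

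The main obstacle, and the only place that needs care, is confirming that the disentailment test really is a sound \emph{and} complete feasibility oracle at every step, rather than merely a necessary condition: I must argue that on a convex graph the test of Theorem~\ref{t:bc-disentail} exactly captures $\shc$ of Theorem~\ref{thm:extend-hall} (equivalently, that no violated subset $P$ can escape the interval-based reasoning of the decomposition on convex instances), and that this property is preserved when a variable is fixed to a single value. Once that equivalence is in hand, the self-reduction and the complexity count are routine.
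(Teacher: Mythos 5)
Your proposal is correct and follows essentially the same route as the paper, which states this theorem as an immediate consequence of Theorem~\ref{t:bc-disentail}; your decision-to-search self-reduction is the standard way to make that step explicit, and it is valid because fixing a variable to a singleton keeps every domain an interval, so each reduced instance is still a convex \oalldiff instance to which the disentailment test applies. The ``obstacle'' you flag at the end is already discharged by what you cite: Theorem~\ref{t:bc-disentail} guarantees that bound disentailment is always detected, and the soundness of the decomposition (every bound support of \oalldiff extends to a solution of constraints~\eqref{eqn::firstOverlappingAlldiff}--\eqref{eqn::lastOverlappingAlldiff}) gives the converse, so the test is a complete feasibility oracle without any further argument about which sets $P$ the interval reasoning captures.
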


Next, we present an algorithm to enforce $BC$.
We show that constraints~\eqref{eqn::firstOverlappingAlldiff}--\eqref{eqn::lastOverlappingAlldiff}
together with  the following two constraints enforce all but one of the rules from Theorem~\ref{thm:dc-over-alldiff}.
\begin{align}
   C_{1u}^{T}  = C_{1l}^{T} + C_{(l+1)u}^{T} & & \; 1 \leq l  \leq u \leq d \label{eqn::pyramidT-T}\\
   C_{1u}^{S}  = C_{1l}^{S} + C_{(l+1)u}^{S}& & \; 1 \leq l \leq u \leq d \label{eqn::pyramidS-S}
\end{align}

\begin{theorem}
\label{t:pruning-shs}
 Constraints
  \eqref{eqn::firstOverlappingAlldiff}-\eqref{eqn::pyramidS-S}
  enforce Rules~\ref{t:e:out_all}--~\ref{t:e:out_inter}. 
\end{theorem}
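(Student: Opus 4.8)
The plan is to argue the four rules through a single mechanism. Fix a \shs $P$ and, using convexity, assume $N(P)=[a,b]$ is one interval. I would show that at any BC fixpoint of \eqref{eqn::firstOverlappingAlldiff}--\eqref{eqn::pyramidS-S} the upper bound of the counter governing the value-region targeted by a rule has been pushed down to its lower bound; once a counter $C$ satisfies $ub(C)=lb(C)$, the corresponding channeling sum among \eqref{eqn::channelST}--\eqref{eqn::channelT} forces the membership variable of every out-of-$P$ variable $u$ of the relevant type to $0$, and then \eqref{eqn::firstOverlappingAlldiff}--\eqref{eqn::chaneling} remove the value from $X_u$. That last channeling step is routine; the work is in making the counters tight.

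I would first translate $P$ into the decomposition. Since every variable of $P$ has its domain inside $[a,b]$, the forced lower bounds read $lb(C^{ST}_{ab})=|P^{ST}|$, $lb(C^{S}_{ab})=|P^{S}|$ and $lb(C^{T}_{ab})=|P^{T}|$, so the defining equality of a \shs becomes $b-a+1+|N(P^S)\cap N(P^T)| = lb(C^{ST}_{ab})+lb(C^{S}_{ab})+lb(C^{T}_{ab})$. The engine is the chaining bound marked $(*)$ in the proof of Theorem~\ref{t:bc-disentail}: for a \shs its coefficient of $x$ is now exactly $0$ rather than negative, and running the same telescoping of Lemma~\ref{l:chaining} localized to $[a,b]$ yields the tight inequality $ub(C^{ST}_{ab}) \le b-a+1-lb(C^S_{ab})-lb(C^T_{ab})+|N(P^S)\cap N(P^T)|$, whose right-hand side equals $lb(C^{ST}_{ab})$ by the displayed equality. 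This immediately gives Rule~\ref{t:e:out_inter}: \eqref{eqn::channelST} forces the membership of every $u\in(S\cap T)\setminus P$ on $[a,b]$ to $0$, deleting all of $N(P)=[a,b]$ from such $u$.

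The remaining rules use the same tightening over the appropriate sub-regions and counters. For Rule~\ref{t:e:out_all} the region is each intersection sub-interval $I_j$ of $N(P^S)\cap N(P^T)$; the localized chaining shows that the $ST$-, $S$- and $T$-counters over $I_j$ all meet their forced values, so no variable outside $P$ of any type can take a value in $I_j$. For Rules~\ref{t:e:out_side_one} and~\ref{t:e:out_side_two} the region is $N(P)\setminus N(P^T)$ (respectively $N(P)\setminus N(P^S)$), which is saturated only on the $S$-side (respectively $T$-side); here the newly added pyramids \eqref{eqn::pyramidS-S} and \eqref{eqn::pyramidT-T} are essential, because they are precisely what lets Lemma~\ref{l:chaining} telescope $C^{S}$ and $C^{T}$ and thus drive $ub(C^{S}_{\cdot})$ (respectively $ub(C^{T}_{\cdot})$) down over that region, after which \eqref{eqn::channelS} (respectively \eqref{eqn::channelT}) performs the deletion. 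I would present Rule~\ref{t:e:out_side_two} as the mirror image of Rule~\ref{t:e:out_side_one}.

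The hard part is the shared term $|N(P^S)\cap N(P^T)|$. Unlike $N(P)$, it is in general a union of several interval-intersections, and the forced counts on a sub-interval $I_j$ need not by themselves witness that $I_j$ is saturated, since a variable of $P$ may be matched into $I_j$ without being confined to it. Establishing the tight upper bounds therefore requires exactly the bookkeeping of Theorem~\ref{t:bc-disentail}: sorting the $S$- and $T$-intervals, folding nested ones into the set $RI$, and choosing the telescoping sequence fed to Lemma~\ref{l:chaining} so that, region by region, the BC-derived upper bound on each counter meets its forced lower bound, and does so only on the regions named by the rule. Checking that this tightness holds simultaneously for the three counters over the correct sub-intervals, and nowhere that would cause spurious pruning, is the delicate step; everything after the counters are pinned is the routine channeling back to the bounds of $X_u$.
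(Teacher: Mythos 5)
Your proposal is correct and follows essentially the same route as the paper's (sketched) proof: localize the telescoping of Lemma~\ref{l:chaining} to the interval $N(P)=[a,b]$ of a \shs so that the counters $C^{ST}$, $C^{S}$, $C^{T}$ become saturated ($ub=lb$) over the relevant sub-intervals, with \eqref{eqn::pyramidS-S} and \eqref{eqn::pyramidT-T} supplying the telescoping for the one-sided counters, and then channel back through \eqref{eqn::channelST}--\eqref{eqn::channelT} and \eqref{eqn::firstOverlappingAlldiff}--\eqref{eqn::chaneling} to perform the prunings of Rules~\ref{t:e:out_all}--\ref{t:e:out_inter}. The bookkeeping over the intersection intervals that you flag as the delicate step is likewise the part the paper only sketches, so your write-up is at least as complete as the published argument.
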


\begin{proof}[Proof Sketch]
Based on Lemma~\ref{l:chaining}, similar to the proof of Theorem~\ref{t:bc-disentail}, 
we show that all intervals that contain variables from a $\shs$ $P$ become saturated intervals, so that
the lower bounds of the corresponding variables  $C^{ST}$,
$C^{S}$ and $C^{T}$  equal 
to their upper bounds. Hence, these values are pruned from domains of 
variables outside the set $P$.
\myOmit{
Suppose $P$ is a $\shs$, $N(P) = [a,b]$.
Let $[c_i,d_i]$, $[e_i,f_i]$ and $I_i$
be the same sets of intervals as defined in  the proof of Theorem~\ref{t:bc-disentail}.
Let $[g_1, h_1] \cup \ldots \cup [g_s, h_s]$
be a set of intervals that tightly contain variables from $P^{ST}$.
Using the chaining argument we can show that  
all this intervals because saturated, so that 
$lb(C^{ST}_{g_i,h_i}) = ub(C^{ST}_{g_i,h_i})$, $i=1,\ldots,s$,
$lb(C^S_{c_i,d_i}) = ub(C^S_{c_i,d_i})$, $i=1,\ldots,k$,
$lb(C^T_{e_i,f_i}) = ub(C^T_{e_i,f_i})$, $i=1,\ldots,m$,
$lb(C^S_{I_i}) = lb(C^T_{I_i}) = ub(C^S_{I_i}) = ub(C^T_{I_i}) = |I_i|$, $i=1,\ldots,m$.
It is easy to see that these saturated intervals cause the removal of their values from 
the rest of the variable domains and all rules are enforced.

We use the chaining argument from the proof of Theorem~\ref{t:bc-disentail}. 
We only sketch the proof because it is similar to the proof of Theorem~\ref{t:bc-disentail}. Here we 
only show one  step of the procedure that proves  saturation of the intersection intervals $I_p$.
Let $f_m = b$, $ e_m < d_k$ and $I_p$ be $[e_m,d_k]$. Consider the interval $[a,d_k]$ .  
By \eqref{eqn:est_upper_bound} and the fact that all variables in
$P^S \cup (P^T \setminus \bigcup_{X_i \in P^T} \{X_i \in [e_m,b]\})$ 
are less than or equal to $d_k$,
we get
$ub(C^{ST}_{ad_k})  \leq d_k-a+1 - lb (C^S_{ab}) - lb (C^T_{ab}) + |N(P^S) \cap N(P^T)| + (lb(C_{e_mb}) - |I_p|)$.
By the constraint $C^{ST}_{ad_k} + C^{ST}_{(d_k+1)b} = C^{ST}_{ab}$ 
and
$lb(C_{ab}^{ST}) =  b- a + 1  - lb(C_{ab}^{S}) -  lb(C_{ab}^{T}) + |N(P^S) \cap N(P^T)|$
we get that
$lb(C^{ST}_{(d_k+1)b})  \geq  b - d_k  - (lb(C_{e_mb}) - |I_p|)$.
By \eqref{eqn::lastOverlappingAlldiff}, 
$ub(C^{T}_{(d_k+1)b})  \leq lb(C_{e_mb}) - |I_p|$.
By \eqref{eqn::pyramidT-T},
$lb(C^{T}_{e_md_k})  \geq |I_p|$. As $I_p = [e_m,d_k]$, we get that 
$ub(C^T_{I_p}) = lb(C^T_{I_p}) = |I_p|$.

Note that if there is a $\shs$ P inside the interval $[a,b]$ then 
a union of  intersection intervals saturated to their length
constitutes the set $N(P^S) \cap N(P^T)$.}
\end{proof}

\myOmit{
We observe 
that constraints \eqref{eqn::firstOverlappingAlldiff}-\eqref{eqn::lastOverlappingAlldiff}
give us an indication that an interval of values may contain a $\shs$ or $\ashs$.
\begin{corollary}
\label{c:detection}
  If $P$ is a $\shs$, $N(P) =[a,b]$ then $lb(C^{ST}_{ab}) = ub(C^{ST}_{ab})$. 
  If $P$ is an $\ashs$, $N(P) =[a,b]$ then $lb(C^{ST}_{ab}) + 1 \geq ub(C^{ST}_{ab})$. 
\end{corollary}
\begin{proof}
  This follows from~\eqref{eqn:est_upper_bound}.
\end{proof}
}

\begin{theorem}
	\label{t:pruning-ashs}
	Suppose constraints
        \eqref{eqn::firstOverlappingAlldiff}-\eqref{eqn::pyramidS-S} together with 
$C_{lu}^{ST}  = C_{lk}^{ST} + C_{(k+1)u}^{ST}$, $2 \leq l \leq k \leq u \leq d$
        have reached
	their fixpoint. Rule~\ref{t:e:out_inter_almost} can now be enforced in $O(nd^3)$ time. 	
\end{theorem}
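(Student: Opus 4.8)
The plan is to turn the enforcement of Rule~\ref{t:e:out_inter_almost} into one saturation computation per intersection variable. By the soundness direction of Theorem~\ref{thm:dc-over-alldiff}, an edge $(u,v)$ with $u\in S\cap T$ and $v\in N(P^S)\cap N(P^T)$ for some \ashs $P$ with $u\notin P$ is precisely an edge whose selection turns $P$ into a failing \shs in $G'_{(u,v)}$. Committing such an edge removes the intersection variable $X_u$ from the sums~\eqref{eqn::channelST} and deletes the value $v$. So I would fix one intersection variable $X_u$ at a time, exclude it from the intersection counts $C^{ST}$, recompute the fixpoint of the $C^{ST},C^S,C^T$ variables, and then read off which values $v$ have become saturated shared values; these are exactly the values Rule~\ref{t:e:out_inter_almost} removes from $X_u$. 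Excluding $X_u$ from the counts serves two purposes: it guarantees that the detected set $P$ does not contain $X_u$, and it exposes the \ashs sets that arise from an \shs by dropping a single intersection variable.

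The detection of saturated shared values reuses the machinery of Theorems~\ref{t:bc-disentail} and~\ref{t:pruning-shs}. The added constraint $C_{lu}^{ST}=C_{lk}^{ST}+C_{(k+1)u}^{ST}$ gives additive control of $C^{ST}$ over every subinterval, so that Lemma~\ref{l:chaining} applies to arbitrary intervals and not only to prefixes $[1,u]$. With $X_u$ excluded, I would run the chaining argument of Theorem~\ref{t:bc-disentail} to obtain, for each interval $[a,b]$, the same tightened upper bound on $ub(C^{ST}_{1a-1})$ as in that proof. An interval $[a,b]$ that was the neighborhood of an \shs containing $X_u$ now has its intersection lower bound reduced by one and becomes an \ashs, that is, $ub(C^{ST}_{ab})-lb(C^{ST}_{ab})=1$; its intersection subintervals then saturate exactly as in the \shs case of Theorem~\ref{t:pruning-shs}, and the corresponding shared values are pruned from $X_u$.

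For the complexity I would process the $O(n)$ intersection variables one by one. For a fixed $X_u$, recomputing the affected $C^{ST}$ bounds and running the chaining argument over all $O(d^2)$ intervals at cost $O(d)$ each costs $O(d^3)$, after which reading off the saturated shared subintervals and pruning them from $X_u$ is linear in $d$. This gives the claimed $O(nd^3)$ bound.

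The main obstacle is correctness, not running time: I must show that the counting detection applied to the $X_u$-excluded problem flags a value $v$ iff there is a genuine \ashs $P$ with $u\notin P$ and $v\in N(P^S)\cap N(P^T)$. Two points need care. First, deleting the value $v$ can break the neighborhood $N(P)$ into non-contiguous pieces, so the argument has to be carried out on the intersection subintervals exposed by the saturated $C^S$, $C^T$ and $C^{ST}$ variables rather than on $N(P)$ as a single interval. Second, I must rule out spurious prunings in which excluding $X_u$ merely loosens a set that was never an \ashs for a $P$ avoiding $X_u$; here the near-saturation characterization $ub(C^{ST}_{ab})-lb(C^{ST}_{ab})\le 1$ inherited from the fixpoint is the tool that separates the \shs, \ashs and loose cases and keeps the detection sound.
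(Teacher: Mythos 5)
There is a genuine gap in your detection mechanism. Excluding $X_u$ from the sums~\eqref{eqn::channelST} only \emph{weakens} lower bounds on the $C^{ST}$ variables, so it cannot create new saturation: any interval that is saturated after the exclusion was already saturated before, and those are exactly the \shs intervals already handled by Rules~\ref{t:e:out_all}--\ref{t:e:out_inter}. An \ashs $P$ with $u\notin P$ has counts that never involved $X_u$ in the first place, so it remains near-saturated (one unit of slack) after you drop $X_u$, and your ``read off the saturated shared values'' step finds nothing for it. Under the more charitable reading you offer --- detect a \shs $Q$ containing $X_u$ and treat $Q\setminus\{X_u\}$ as the \ashs --- you only cover those \ashs sets $P$ for which $P\cup\{X_u\}$ is itself a \shs, which forces $D(X_u)\subseteq N(P)$. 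But Rule~\ref{t:e:out_inter_almost} requires pruning $N(P^S)\cap N(P^T)$ from \emph{every} variable in $(S\cap T)\setminus P$, including those whose domains are not contained in $N(P)$: the soundness argument deletes the vertex $v$ from $B$ outright when $u\in A^{ST}$, so the rule fires regardless of where the rest of $D(X_u)$ lies. Your scheme misses all of those prunings.

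The device you are missing goes in the opposite direction: instead of removing a variable to expose slack, the paper \emph{adds} one to absorb it. For each candidate interval $[a,b]$ with $lb(C^{ST}_{ab})+1\geq ub(C^{ST}_{ab})$, introduce a dummy variable $Z$ with $D(Z)=[a,b]$ placed in $P^{ST}$; then $P\cup\{Z\}$ is a \shs, and simulating constraints~\eqref{eqn::firstOverlappingAlldiff}--\eqref{eqn::pyramidS-S} inside $[a,b]$ with $Z$ included saturates the intersection subintervals exactly as in Theorem~\ref{t:pruning-shs}, which identifies $N(P^S)\cap N(P^T)$. Those values are then pruned from all of $(S\cap T)\setminus P$ at once. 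The iteration is over the $O(d^2)$ intervals (at $O(n+d)$ for identification and $O(nd)$ for pruning per interval), not over the $n$ intersection variables; your per-variable loop with a full fixpoint recomputation each time would also not obviously stay within the claimed $O(nd^3)$ bound.
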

\begin{proof}[Proof Sketch]
Let $P$  is an $\ashs$,  $N(P) = [a,b]$. Similar to the proof
of Theorem~\ref{t:bc-disentail}, we can obtain that $lb(C^{ST}_{ab}) + 1 \geq ub(C^{ST}_{ab})$. 
Hence, we can identify intervals, that might contain an $\ashs$ $P$.  
Next, we observe that if we add a dummy variable $Z$, $D(Z) = [a,b]$ to the set $P$
so that  $P' = P \cup \{Z\}$, $Z \in {P^{ST}}'$ then $P'$ is a $\shs$.
This allows us to identify the set $N(P^S) \cap N(P^T)$ by simulating constraints~\eqref{eqn::firstOverlappingAlldiff}-\eqref{eqn::pyramidS-S}
inside the interval $[a,b]$ taking into account the variable $Z$.
\myOmit{ 
Corollary~\ref{c:detection} shows that for each $\ashs$ $P'$, $N(P') = [a,b]$ we get that $lb(C^{ST}_{ab}) + 1 \geq ub(C^{ST}_{ab})$. 
We omit the case where $lb(C^{ST}_{ab}) = ub(C^{ST}_{ab})$ because this case is either straightforward
or can be reduced to the second case.
%
If  $lb(C^{ST}_{ab}) + 1 \geq ub(C^{ST}_{ab})$ then $[a,b]$ might contain an $\ashs$ $P'$. 
We temporarily add a variable $Z, D(Z) = [a,b]$ to the set of variables $P^{ST}$.
By the observation above, $P' \cup Z$ is a $\shs$. We simulate the chaining argument, identical to
the procedure outlined in the proof of Theorem~\ref{t:pruning-shs}, inside the interval
$[a,b]$ taking into account the dummy variable $Z$.
During this procedure we consider  $O(n)$ intervals, 
because there are at most $n$ variables inside the intervals. 
As constraints \eqref{eqn::firstOverlappingAlldiff}-\eqref{eqn::pyramidS-S} reached
their fixpoint, simulation takes $O(n)$ time. This allows us to find saturated 
intersection intervals $I_i$ to identify the set  $N(P^S) \cap N(P^T)$.
Then, we prune these values from all variables from $S \cap T \setminus P^{ST}$.
}
There are $O(d^2)$ intervals. Finding $N(P^S) \cap N(P^T)$ takes $O(n + d)$ time 
inside an interval. Enforcing the rule takes $O(nd)$ time. Hence, the total time complexity is $O(nd^3)$.
\end{proof}

From Theorems~\ref{t:pruning-shs} and \ref{t:pruning-ashs} it follows that 

\begin{theorem}
	BC on $\oalldiff$ can be enforced in $O(nd^3)$ time.
\end{theorem}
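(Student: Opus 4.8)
The plan is to assemble the theorem from the characterization of bound support in Theorem~\ref{thm:dc-over-alldiff} together with the three propagation results in Theorems~\ref{t:bc-disentail}, \ref{t:pruning-shs}, and \ref{t:pruning-ashs}. First I would recall that enforcing $BC$ on $\oalldiff$ is exactly the simultaneous matching problem on the convex variable-value graph: a bound $lb(X_i)$ or $ub(X_i)$ survives $BC$ iff the corresponding edge extends to a $\sbm$. By Theorem~\ref{thm:dc-over-alldiff}, an edge $(u,v)$ extends to a matching covering $S$ and $T$ iff Rule~\ref{t:e:unstable} (the $\shc$), Rules~\ref{t:e:out_all}--\ref{t:e:out_inter} (the $\shs$ rules), and Rule~\ref{t:e:out_inter_almost} (the $\ashs$ rule) all hold. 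Hence it suffices to exhibit a set of constraints whose joint fixpoint enforces every one of these rules, and to bound the time this takes.

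Each rule is then discharged by one of the earlier results. Bound disentailment, namely the failure of Rule~\ref{t:e:unstable} for some $P$ whose neighborhood is an interval, is detected by constraints \eqref{eqn::firstOverlappingAlldiff}--\eqref{eqn::lastOverlappingAlldiff} in $O(nd^2)$ time by Theorem~\ref{t:bc-disentail}. Theorem~\ref{t:pruning-shs} shows that constraints \eqref{eqn::firstOverlappingAlldiff}--\eqref{eqn::pyramidS-S} saturate every interval containing a $\shs$ and thereby enforce Rules~\ref{t:e:out_all}--\ref{t:e:out_inter}. Finally, Theorem~\ref{t:pruning-ashs} shows that once this decomposition together with the extra pyramid constraint on $C^{ST}$ has reached a fixpoint, Rule~\ref{t:e:out_inter_almost} for every $\ashs$ is enforced in $O(nd^3)$ time. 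Summing the three bounds, the dominant term is $O(nd^3)$, giving the claimed complexity.

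The hard part will be arguing that applying these propagators reaches a common fixpoint at which all rules simultaneously hold, and that this fits within $O(nd^3)$ rather than forcing an expensive alternation between the $\shs$ propagation and the $\ashs$ rule. Enforcing Rule~\ref{t:e:out_inter_almost} changes domains and could in principle create new $\shs$ or $\ashs$ sets, so I would need to verify either that a bounded number of passes suffices or that the total work amortizes to $O(nd^3)$. I would handle this by observing that the interval-based variables $b_{il}$, $a_{ilu}$, $C^{S}$, $C^{T}$, $C^{ST}$ are driven to their joint fixpoint by standard constraint propagation, so that every surviving bound simultaneously satisfies all the arithmetic constraints. The soundness direction of Theorem~\ref{thm:dc-over-alldiff} then guarantees that no value with a bound support is ever pruned, while its completeness direction, combined with Theorems~\ref{t:pruning-shs} and \ref{t:pruning-ashs}, guarantees that every value lacking a bound support is pruned. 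Thus the fixpoint of the decomposition coincides with the $BC$-closure of $\oalldiff$, and the overall cost is $O(nd^3)$, completing the proof.
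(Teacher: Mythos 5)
Your proposal is correct and follows essentially the same route as the paper, which derives this theorem directly as a corollary of Theorems~\ref{t:pruning-shs} and \ref{t:pruning-ashs} (with Theorem~\ref{thm:dc-over-alldiff} supplying the rule characterization and Theorem~\ref{t:bc-disentail} the disentailment check), summing the complexities so that the $O(nd^3)$ cost of enforcing Rule~\ref{t:e:out_inter_almost} dominates. If anything, you are more explicit than the paper about the fixpoint/iteration subtlety, which the paper's one-line derivation leaves implicit.
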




\begin{myexample}
  \label{e:bc-prop}
We demonstrate the action of
constraints~\eqref{eqn::firstOverlappingAlldiff}-\eqref{eqn::pyramidS-S}.
The interval $[1,4]$ contains a $\shs$ $P = \{X_2,X_3,X_4,X_6\}$.
\textbf{Rule~\ref{t:e:out_inter}.}
$lb(C^S_{12}) \geq 1$ and $lb(C^T_{23}) \geq 1$ 
$\Rightarrow{\eqref{eqn::overlapping-alldiff-S},~\eqref{eqn::lastOverlappingAlldiff}}\Rightarrow$
$ub(C^{ST}_{12}) \leq 1$ and  $ub(C^{ST}_{23}) \leq 1$  
$\Rightarrow{~\eqref{eqn::pyramidST}}\Rightarrow$
$ub(C^{ST}_{13}) \leq 2$. 
The interval $[1,3]$ is saturated, as $lb(C^{ST}_{13}) = ub(C^{ST}_{13})$. Hence, by~\eqref{eqn::firstOverlappingAlldiff}-\eqref{eqn::channelST},
$[1,3]$ is removed from $D(X_5)$. 
\textbf{Rules~\ref{t:e:out_side_one},\ref{t:e:out_side_two}.}
As $lb(C^{ST}_{13}) = 2$ 
$\Rightarrow{~\eqref{eqn::overlapping-alldiff-S}}\Rightarrow$
$ub(C^{S}_{13}) \leq 1$  $\Rightarrow{~\eqref{eqn::pyramidS-S}}\Rightarrow$ $ub(C^{S}_{12}) \leq 1$. 
The interval $[1,2]$ is saturated, as $lb(C^{S}_{12}) = ub(C^{ST}_{12})$.
Hence, by~\eqref{eqn::firstOverlappingAlldiff}-\eqref{eqn::chaneling},\eqref{eqn::channelS}, $[1,2]$ is removed from $D(X_1)$. Similarly, $[2,3]$ is removed from  $D(X_7)$. 
\textbf{Rule~\ref{t:e:out_all}.}  This is satisfied as $2$ is removed from 
all variables outside $P$.
\myOmit{
!!! This part does RC
We continue reasoning and show that the value $2$ is pruned from 
$X_3$ and $X_4$. As $ub(C^{ST}_{12}) \leq 1$ $\Rightarrow$ $lb(C^{ST}_{33}) = 1$ $\Rightarrow$ 
$ub(C^{T}_{33}) = 0$ and $lb(C^{T}_{22}) = ub(C^{T}_{22}) = 1$. Hence, 
$ub(C^{ST}_{22}) = 0$ and  $lb(C^{ST}_{11}) = 1$. Finally,
$ub(C^{S}_{11}) = 0$ and $lb(C^{S}_{22}) = ub(C^{S}_{22}) = 1$.

$$
{\scriptsize
\begin{array}{c c|ccccc} 
& & 1 & 2 & 3 & 4 & 5  \\ \hline
\multirow{2}{*}{$A^S = S \setminus T$}
 & X_1 &  &  & \ast & \ast &   \\ 
 & X_2 &  & \ast & & & \\ 
\hline \hline
\multirow{3}{*}{$A^{ST} = S \cap T$}
 & X_3 &  \ast &  & \ast &  & \\ 
 & X_4 &  \ast &  & \ast &  & \\ 
 & X_5 &   &  &  &  \ast &  \ast\\ 
\hline \hline
\multirow{2}{*}{$A^T = T \setminus S$}
 & X_6 &  & \ast  & & &  \\
 & X_7 &  &   & & \ast& \ast 
\end{array}
}
$$
We can show that the decomposition does the same pruning as
a $DC$ propagator would do on this example.
}
\end{myexample}



\subsection {Exponential separation}

We now give a pathological problem on which
our new propagator does exponentially less
work than existing methods. 

\begin{theorem}
  \label{thm:path-over-alldiff}
  There exists a class of problems 
such that enforcing $BC$ on
  $\oalldiff$ immediately detects unsatisfiability while 
  a search method that enforces DC on the decomposition
 into $\alldiff$ constraints explores an exponential search tree
  regardless of branching. 
\end{theorem}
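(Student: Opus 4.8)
The plan is to exhibit an explicit parametrised family $I_k$ and treat the two halves separately. For $k\ge 3$ take values $1,\dots,3k$ and variables $A_1,\dots,A_k\in S\setminus T$ with $D(A_i)=[1,2k-1]$, variables $B_1,\dots,B_k\in T\setminus S$ with $D(B_i)=[k+2,3k]$, and variables $C_1,\dots,C_{2k}\in S\cap T$ with $D(C_j)=[1,3k]$. Every domain is an interval, so the instance is convex and the polynomial $\oalldiff$ propagator applies; it has $4k$ variables and $3k$ values, so ``exponential in $k$'' means exponential in the instance size.

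First I would show that BC on $\oalldiff$ fails at the root. Take $P=X$. Then $N(P)=[1,3k]$, $N(P^S)=[1,2k-1]$, $N(P^T)=[k+2,3k]$, hence $N(P^S)\cap N(P^T)=[k+2,2k-1]$ and $|N(P)|+|N(P^S)\cap N(P^T)|=3k+(k-2)=4k-2<4k=|P|$. Thus $P$ violates the \shc, so by Theorem~\ref{thm:extend-hall} no simultaneous matching exists. Since $N(P)$ is an interval, $P$ is exactly the kind of witness used in the disentailment argument of Theorem~\ref{t:bc-disentail}, so the BC algorithm reports bound disentailment at the root in one shot.

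Next I would show that the decomposition is blind. At the root neither \alldiff has a tight proper Hall set: every proper $Q\subsetneq S$ has slack $|N(Q)|-|Q|\ge k-1\ge 1$ (the only tight set is $S$ itself, which merely records that $S$ must be a permutation), the graph is elementary, so the matching-based DC filtering for \alldiff removes no value and does not fail; symmetrically for $T$. The engine of the lower bound is a \emph{commitment lemma}, provable by the same counting as in Theorem~\ref{t:bc-disentail}: a partial assignment can render $\alldiff(S)$ Hall-infeasible only once at least $k$ of the $C_j$ take values in the zone $[1,2k-1]$ (and $\alldiff(T)$ only once $k$ of the $C_j$ lie in $[k+2,3k]$), because the $k$ variables $A_i$ require $k$ free values inside a zone of only $2k-1$ values. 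Consequently DC on the decomposition detects failure only at nodes that have committed at least $k$ shared variables.

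Finally I would convert the commitment lemma into a bound on the number of search nodes that is independent of the branching heuristic. In any backtracking tree the complete assignments are covered by the leaves, each leaf is a failure detected by the decomposition and therefore fixes at least $k$ of the $C_j$; a leaf fixing a variable set $F$ covers $\prod_{v\notin F}|D(v)|$ complete assignments, which is at most a $(3k)^{-k}$ fraction of the total as soon as $|F\cap\{C_j\}|\ge k$, so there are at least $2^{\Omega(k)}$ leaves regardless of which variables and values the search branches on. The main obstacle is making this counting fully rigorous, since DC itself prunes values (so the leaves need not cleanly partition the original assignment space) and failure may be triggered on either the $S$ side or the $T$ side. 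The clean way around this is to restrict attention to assignments of the $C_j$ into the two \emph{pure} zones $[1,k+1]$ and $[2k,3k]$ and to observe that, because $2k$ shared variables force at least $k$ into one zone, the induced refutation is precisely a tree-like refutation of a pigeonhole principle split across the two \alldiff constraints, for which an exponential-size lower bound is classical. This isolates the only genuinely delicate step; the remaining claims are direct verifications.
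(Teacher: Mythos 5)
Your family and the first half of your argument are sound and essentially match the paper's: the paper uses $\alldiff(X\cup Y)\land\alldiff(Y\cup Z)$ with $D(X_i)=[1,2n-1]$, $D(Y_i)=[1,4n-1]$, $D(Z_i)=[2n,4n-1]$, and likewise refutes the instance at the root by applying Theorem~\ref{thm:extend-hall} to the full variable set (there $|N(P)|=4n-1$ and $N(P^S)\cap N(P^T)=\emptyset$). Your check that the \shc fails for $P=X$ in your instance is correct, as is your observation that DC on each individual \alldiff prunes nothing at the root.

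The gap is in the lower bound for the decomposition, and it is not a formality. Because your instance makes each \alldiff tight ($3k$ variables on $3k$ values), Hall sets exist from the root and nontrivial DC pruning begins after the very first decision; this forces you into the proof-complexity route (tree-like pigeonhole lower bounds), which you only gesture at. The intermediate counting argument is broken as stated, for the two reasons you yourself flag: once DC prunes, the leaves no longer cover the full assignment space, so ``each leaf covers at most a $(3k)^{-k}$ fraction'' gives no lower bound on the number of leaves; and a branching strategy that splits domains never fixes $k$ of the $C_j$ at all. The reduction to a tree-like PHP refutation could likely be completed, but one must show that the propagators' deductions are simulable in the relevant proof system, and that is precisely the step you leave open --- it is the crux of this half of the theorem. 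The paper sidesteps all of this by giving the shared variables slack: in $\mathcal{I}_n$ each \alldiff has $3n$ variables but $4n-1$ values, every subset of at most $n$ variables has at least $2n-1$ neighbours and every larger subset has all $4n-1$, so no Hall set --- hence no pruning and no failure --- can arise until at least $n-1$ variables have been instantiated; every leaf then lies at depth $\Omega(n)$ and the exponential bound follows for any branching. If you widen the value range in your family so that neither \alldiff is tight, the delicate step disappears and your proof closes the same way.
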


\begin{proof}

  The instance $\mathcal{I}_n$ is defined as follows
   $ \mathcal{I}_n  =  \alldiff ([X \cup Y]) \land  \alldiff ([Y \cup Z])$,
   $D(X_i) = [1,2n-1]$, $i=1,\ldots,n$, 
   $D(Y_i) = [1,4n-1]$, $i=1,\ldots,2n$ and
   $D(Z_i) = [2n,4n-1]$, $i=1,\ldots,n$.

\myOmit{
  \begin{eqnarray*}
    \mathcal{I}_n & = & \alldiff ([X_1,\ldots,X_{n},Y_{1},\ldots,Y_{2n}]) \land \\
    &&  \alldiff ([Y_{1},\ldots,Y_{2n},Z_{1},\ldots,Z_{n}]) \land \\
    &&  D(X_i) = [1,2n-1], \; \forall i=1,\ldots,n \land \\ 
    &&  D(Y_i) = [1,4n-1], \; \forall i=1,\ldots,2n \land\\
    &&  D(Z_i) = [2n,4n-1], \; \forall i=1,\ldots,n
  \end{eqnarray*}
}
\textbf{$\oalldiff$.} Consider the interval $[1,4n-1]$.
$|P| = 4n$, $|N(P)| = 4n-1$ and $|N(P^S) \cap  N(P^T)| = 0$.
By Theorem~\ref{thm:extend-hall}, 
we detect unsatisfiability. 

\textbf{Decomposition.} 
Consider any $\alldiff$ constraint. A subset of $n$ or fewer variables 
has at least $2n-1$ values in their domains and a subset of $n+1$ to
$3n$ variables has
$4n-1$ values in their domains. Thus, to obtain a Hall set and prune,
we must instantiate at least $n-1$ variables. 
\end{proof}
\myOmit{
It is easy to verify that a pair of $\alldiff$ constraints exhibits
this exponential behaviour. In experiments, even
instances as small as $I_7$ take more than 600 seconds to be solved using
the $DC$ propagator of~\cite{regin1} and are unsolvable when
enforcing $BC$~\cite{lopez1} or $VC$ (pruning equivalent to a clique
of binary inequalities).
}
\myOmit{
We evaluated the performance of $DC$,  $VC$ (clique of binary inequalities) and $BC$~\cite{lopez1} 
 $\alldiff$ propagators  on the example from Theorem~\ref{thm:path-over-alldiff}.
Note that the $\oalldiff$ constraint solves the problem without search while
the time to solve this problem grows exponentially for other propagators.
}

\myOmit{
\begin{table}
\begin{center}
{\scriptsize
\caption{\label{t:t1}  The separation example.
$t$ is time to solve a problem, $\#f$ is the number of backtracks.
Timeout is $1000$ sec.
}
\begin{tabular}{|r|c|c|c|c|} 
\hline 
& {$\oalldiffshort$}
& {$BC$}
& {$VC$}
& {$DC$ } \\ 
\hline 
\hline
 $I_n$& 
 \#f/ 
 t& 
 \#f/ 
 t& 
 \#f/ 
 t& 
 \#f / 
 t \\ 
\hline 
$I_4$   & \textbf{      0} / \textbf{0} & $33\cdot 10^6$/  $513$ & 0 / -  &    $839$ /   0.03 \\ 
 \hline 
$I_5$   & \textbf{      0} / \textbf{0} & 0 / - & 0 / - &  $15119$ /   $1$ \\ 
 \hline 
$I_6$   & \textbf{      0} / \textbf{0} & 0 / - & 0 / - &  $332639$ /  $20$ \\ 
 \hline 
$I_7$   & \textbf{      0} / \textbf{0} & 0 / - & 0 / - & $8648639$ / $666$ \\ 
 \hline 
\end{tabular}}
\end{center}
\end{table}
}

\section{Experimental results}


To evaluate the performance of our decomposition we carried out an
experiment on random problems. We used
Ilog 6.2 on an Intel Xeon 4 CPU, 2.0 GHz, 4GB RAM.  
%
%
We  compare the performance of the $DC$,  $BC$~\cite{lopez1} propagators  
and our decomposition into constraints~\eqref{eqn::firstOverlappingAlldiff}-\eqref{eqn::pyramidS-S} 
for the $\oalldiff$ constraint (\oalldiffshort).
We use randomly generated problems with three 
global constraints:
$\alldiff(X \cup W)$,
$\alldiff(Y \cup W)$
and 
$\alldiff(Z \cup W)$, 
and a linear number of binary 
ordering relations between 
variables in $X$, $Y$ and $Z$. We use a random variable ordering and 
run each instance with 50 different seeds. 
As Table~\ref{t:t1} shows, 
our decomposition 
reduces the search space  significantly, is much faster 
and solves more instances overall.

\begin{table}[htb]
\begin{center}
{\scriptsize
\caption{\label{t:t1}
Random problems.  $n$ is the size of $X$, $Y$ and $Z$, $o$ is the size of $W$,
$d$ is the size of variable domains.
Number of instances solved in 300 sec out of 50 runs / average backtracks/average time to solve.
}
\begin{tabular}{|r|c|c|c|} 
\hline 
 n,d,o 
& {$BC$}
& {$DC$}
&\multicolumn {1}{|c|}{$\oalldiffshort$ } \\ 
\hline 
\hline
 & 
 \#s / 
 \#bt / 
 t& 
 \#s / 
 \#bt / 
 t& 
 \#s / 
 \#bt / 
 t \\  
\hline 
$4,15,10$   & 14 /2429411  /     61.8 & 41 /1491341  /     52.1 & \textbf{42} / \textbf{  17240}  /  \textbf{   32.5} \\ 
 \hline 
$4,16,11$   & 6 /5531047  /    153.7 & 22 /1745160  /     67.9 & \textbf{31} / \textbf{   8421}  /  \textbf{   19.5} \\ 
 \hline 
$4,17,12$   & 1 /     17  /  0 & 6 /2590427  /    100.9 & \textbf{24} / \textbf{   8185}  /  \textbf{   21.5} \\ 
 \hline 
$5,16,10$   & 11 /3052298  /     82.0 & 37 /1434903  /     58.2 & \textbf{42} / \textbf{  20482}  /  \textbf{   48.5} \\ 
 \hline 
$5,17,11$   & 2 /3309113  /     94.5 & 19 /2593819  /    114.6 & \textbf{26} / \textbf{   4374}  /  \textbf{   15.8} \\ 
 \hline 
$5,18,12$   & 1 /     17  /  0 & 4 /2666556  /    133.1 & \textbf{22} / \textbf{   3132}  /  \textbf{   12.2} \\ 
 \hline 
$6,17,10$   & 11 /2845367  /     79.1 & 31 /1431671  /     66.3 & \textbf{40} / \textbf{   6796}  /  \textbf{   21.9} \\ 
 \hline 
$6,18,11$   & 4 / 199357  /      6.6 & 16 /1498128  /     80.2 & \textbf{31} / \textbf{   4494}  /  \textbf{   17.5} \\ 
 \hline 
$6,19,12$   & 4 /3183496  /    110.0 & 5 /1035126  /     66.2 & \textbf{27} / \textbf{   3302}  /  \textbf{   15.5} \\ 
 \hline 
\multicolumn {1}{|r|} { TOTALS }& & &  \\ 
\multicolumn {1}{|r|} {sol/total}& \multicolumn {1}{|c|} { 54 /450}& \multicolumn {1}{|c|} { 181 /450}& \multicolumn {1}{|c|} {\textbf{285} /450}\\ 
\multicolumn {1}{|r|} {avg time for sol}& \multicolumn {1}{|c|}{ 78.072} & \multicolumn {1}{|c|}{ 70.551} & \multicolumn {1}{|c|}{\textbf{ 24.689}} \\ 
\multicolumn {1}{|r|} {avg bt for sol}& \multicolumn {1}{|c|}{2818926} & \multicolumn {1}{|c|}{1666568} &\multicolumn {1}{|c|}{\textbf{   9561}} \\ 
\hline 
\end{tabular}}
\end{center}
\vspace{-2ex}
\end{table}

\myOmit{
\section{Other related work}

There have been many studies on propagation algorithms for a single
\alldiff constraint.  A domain consistency algorithm that runs in
$O(n^{2.5})$ was introduced in~\cite{regin1}. A range consistency
algorithm was then proposed in~\cite{leconte} that runs in time
$O(n^2)$. The focus was moved from range consistency to bound
consistency with~\cite{Puget98}, who proposed a bound consistency
algorithm that runs in $O(n\log n)$. This was later improved further
in~\cite{mtcp02} and then in~\cite{lopez1}.

Little work has been done for propagating conjunctions of \alldiff
constraints. The problem was shown to be NP-hard in~\cite{KEKM08}. In
the same work, the authors proposed approximation algorithms for the
simultaneous matching problem. It is unclear, however, whether they
can be used for building constraint propagation algorithms.

In~\cite{ahhtcp07}, it was proposed that communication between
different constraints can be improved by generalizing pruning to
removing paths from a multi-valued decision diagram. Propagating
conjunctions of \alldiff constraints was proposed as an application,
but the level of consistency enforced in this setting is not
well-defined.
}
\section{Conclusions}

We have generalized Hall's theorem 
to simultaneous matchings in a bipartite graph. This
generalization suggests a polynomial time algorithm to find a
simultaneous matching in a convex bipartite graph. We applied
this to a problem in constraint programming of propagating
conjunctions of \alldifferent constraints. Initial experimental
results suggest that reasoning about such conjunctions can significantly
reduce the size of the explored search space. 
There are several avenues for future research. 
For example, the algorithmic techniques proposed in~\cite{puget98}
and~\cite{lopez1} may be generalizable to simultaneous
bipartite matchings, giving more efficient propagators.
Further, matchings are
used to propagate other constraints such as \nvalue\ \cite{nvalue}. 
It may be possible to apply similar insights 
to develop propagators for conjunctions of
other global constraints, 
or to improve existing propagators for global constraints that
decompose into overlapping constraints
like \sequence\ \cite{sequence}. Finally, we may 
be able to develop polynomial time
propagators for otherwise intractable cases if certain parameters
are fixed \cite{fixed}.

\section*{Acknowledgments}


This research is supported by ANR UNLOC project (ANR 08-BLAN-0289-01),
the Australian 
Government's  Department of Broadband, Communications and the Digital Economy
and the
ARC. 

\bibliographystyle{aaai}



\end{document}